\pdfoutput=1
\documentclass[10pt]{article}
\usepackage{qclock}
\usepackage{mathtools}
\usepackage{array}
\usepackage{aliascnt}
\usepackage[capitalise,nameinlink]{cleveref}
\newcommand{\NumEps}{0.39}
\newcommand{\NumR}{1.386}
\newcommand{\NumG}{0.1540}
\newcommand{\NumTfluid}{16.65}
\newcommand{\NumODEdouble}{3.94}
\newcommand{\NumTfluidC}{19.00}
\newcommand{\NumTlevel}{11.54}
\newcommand{\NumFirstDblA}{3.97}
\newcommand{\NumFirstDblAb}{4.17}
\newcommand{\NumFirstDblB}{4.36}
\newcommand{\NumFirstDblC}{4.50}
\newcommand{\NumFirstDblD}{14.95}
\newcommand{\NumEllA}{0.0039}
\newcommand{\NumEzA}{16.7}
\newcommand{\NumCVA}{0.044}
\newcommand{\NumQloA}{15.5}
\newcommand{\NumQhiA}{17.9}
\newcommand{\NumPtwelveA}{1.2\times10^{-13}}
\newcommand{\NumNstarA}{130}
\newcommand{\NumErrPrevA}{0.0519}
\newcommand{\NumErrNA}{0.0499}
\newcommand{\NumWA}{0.604}
\newcommand{\NumETnA}{6.6}
\newcommand{\NumRemA}{10.07}
\newcommand{\NumRemLoA}{9.36}
\newcommand{\NumRemHiA}{10.82}
\newcommand{\NumEllB}{0.037}
\newcommand{\NumEzB}{17.0}
\newcommand{\NumCVB}{0.136}
\newcommand{\NumQloB}{13.4}
\newcommand{\NumQhiB}{21.0}
\newcommand{\NumPtwelveB}{0.007}
\newcommand{\NumNstarB}{30}
\newcommand{\NumErrPrevB}{0.0538}
\newcommand{\NumErrNB}{0.0451}
\newcommand{\NumWB}{0.328}
\newcommand{\NumETnB}{11.4}
\newcommand{\NumRemB}{5.56}
\newcommand{\NumRemLoB}{4.40}
\newcommand{\NumRemHiB}{6.88}
\newcommand{\NumEllC}{0.270}
\newcommand{\NumEzC}{19.0}
\newcommand{\NumCVC}{0.367}
\newcommand{\NumQloC}{9.7}
\newcommand{\NumQhiC}{31.9}
\newcommand{\NumPtwelveC}{0.140}
\newcommand{\NumNstarC}{9}
\newcommand{\NumErrPrevC}{0.0641}
\newcommand{\NumErrNC}{0.0332}
\newcommand{\NumWC}{0.088}
\newcommand{\NumETnC}{17.3}
\newcommand{\NumRemC}{1.67}
\newcommand{\NumRemLoC}{0.85}
\newcommand{\NumRemHiC}{2.80}
\newcommand{\NumEllD}{0.898}
\newcommand{\NumEzD}{25.2}
\newcommand{\NumCVD}{0.649}
\newcommand{\NumQloD}{6.9}
\newcommand{\NumQhiD}{57.0}
\newcommand{\NumPtwelveD}{0.196}
\newcommand{\NumNstarD}{5}
\newcommand{\NumErrPrevD}{0.0644}
\newcommand{\NumErrND}{0.0136}
\newcommand{\NumWD}{0.011}
\newcommand{\NumETnD}{24.9}
\newcommand{\NumRemD}{0.28}
\newcommand{\NumRemLoD}{0.08}
\newcommand{\NumRemHiD}{0.64}
\newcommand{\NumMeanUnits}{4.22}
\newcommand{\NumZlo}{2.148}
\newcommand{\NumZhi}{7.098}
\newcommand{\NumLeadTwelve}{5.59}
\newcommand{\NumETnine}{17.3}
\newcommand{\NumEzetaC}{19.00}
\newcommand{\NumLAN}{3.19}

\newcommand{\NumPexplode}{0.743}
\newcommand{\NumPriorMedian}{17.3}
\newcommand{\NumPgtFive}{0.150}
\newcommand{\NumPgtTen}{0.074}
\newcommand{\NumParetoC}{8.67}
\newcommand{\NumAmpHone}{3.4\times10^{-15}}
\newcommand{\NumAmpTwelve}{3.1\times10^{-3}}
\newcommand{\NumAmpLogTen}{11}
\newcommand{\NumLagDouble}{6.37}
\newcommand{\NumGammaStar}{0.279}
\newcommand{\NumShareTen}{47}
\newcommand{\NumShareThousand}{85}
\newcommand{\NumPathFrac}{0.0988}
\newcommand{\NumPathFracSE}{0.00011}
\newcommand{\NumRatioMeans}{0.0878}

\newcommand{\qcSharedTheorem}[2]{%
  \newaliascnt{#1}{theorem}\newtheorem{#1}[#1]{#2}\aliascntresetthe{#1}}
\theoremstyle{qcstatement}
\newtheorem{theorem}{Theorem}[section]
\qcSharedTheorem{proposition}{Proposition}
\qcSharedTheorem{lemma}{Lemma}
\qcSharedTheorem{corollary}{Corollary}
\qcRailEnvironment{theorem}{qcEmber}
\qcRailEnvironment{proposition}{qcEmber}
\qcRailEnvironment{lemma}{qcEmber}
\qcRailEnvironment{corollary}{qcEmber}
\theoremstyle{qcdefinition}
\qcSharedTheorem{definition}{Definition}
\qcRailEnvironment{definition}{qcSlate}
\theoremstyle{qcremark}
\qcSharedTheorem{remark}{Remark}
\qcSharedTheorem{example}{Example}
\crefname{theorem}{Theorem}{Theorems}
\crefname{proposition}{Proposition}{Propositions}
\crefname{lemma}{Lemma}{Lemmas}
\crefname{corollary}{Corollary}{Corollaries}
\crefname{definition}{Definition}{Definitions}
\crefname{remark}{Remark}{Remarks}
\crefname{example}{Example}{Examples}
\crefname{qcbox}{Box}{Boxes}
\numberwithin{equation}{section}

\newcommand{\E}{\mathbb{E}}
\renewcommand{\P}{\mathbb{P}}
\newcommand{\R}{\mathbb{R}}
\newcommand{\N}{\mathbb{N}}
\newcommand{\C}{\mathbb{C}}
\newcommand{\F}{\mathcal{F}}
\newcommand{\eps}{\varepsilon}
\newcommand{\zb}{\bar\zeta}
\newcommand{\dd}{\,\mathrm{d}}
\newcommand{\KL}{\mathrm{KL}}
\newcommand{\sech}{\operatorname{sech}}
\DeclareMathOperator{\Var}{Var}
\DeclareMathOperator{\CV}{CV}
\DeclareMathOperator{\Exp}{Exp}
\DeclareMathOperator{\GammaDist}{Gamma}
\DeclareMathOperator{\BetaDist}{Beta}
\newcommand{\err}{\mathrm{err}}

\hypersetup{pdftitle={The Singularity Has a Distribution: Exact Timing Laws and Detection Limits for a Stochastic Intelligence Explosion},
  pdfauthor={Xiaoyu Li, Zheng Gao, Chiwun Yang, Zhizhou Sha}}

\qcShortTitle{The Singularity Has a Distribution}
\qcKicker{Working paper \textperiodcentered{} 13 October 2026}
\title{The Singularity Has a Distribution}
\subtitle{Exact Timing Laws and Detection Limits for a Stochastic Intelligence Explosion}
\author{Xiaoyu Li\qcAff{1}\qcSep Zheng Gao\qcAff{1}\qcSep Chiwun Yang\qcAff{2}\qcSep Zhizhou Sha\qcAff{3}}
\qcAffiliations{\qcAff{1}University of New South Wales\quad
  \qcAff{2}City University of Hong Kong\quad
  \qcAff{3}University of Texas at Austin}
\qcContact{\href{mailto:xiaoyulics@gmail.com}{xiaoyulics@gmail.com}}
\date{}
\qcFootnoteLeft{Code and data: \href{https://github.com/xiaoyulics/singularity-distribution}{\texttt{github.com/xiaoyulics/singularity-distribution}} (\cref{app:numerics}).}

\begin{document}
\maketitle

\begin{abstract}
\noindent Automating AI research creates a positive feedback loop between software capability and research effort. We study a stochastic realisation of this loop in which independent, positive, multiplicative improvements arrive at a rate proportional to a power of current capability. Within this model, randomness leaves the deterministic explosion threshold unchanged but replaces the blow-up date by an explicit distribution. For equal-sized improvements the explosion time is a geometrically weighted sum of independent exponentials; its $q$-series representation gives its transforms, moments and tail behaviour, including a two-term small-time asymptotic. For Lomax-distributed improvement sizes the explosion time is exactly Gamma distributed. Self-similarity implies that, with parameters known, the relative uncertainty about the remaining time does not vanish as explosion approaches. We formulate a fixed-sample test between constant and geometrically increasing idea-arrival rates. If the baseline rate and the alternative log-rate increment $\ell$ are known, the minimum sample size at error level $\alpha$ is asymptotic to $(12z_\alpha^2)^{1/3}\ell^{-2/3}$, where $z_\alpha$ is the upper-$\alpha$ standard normal quantile. Uncertainty about the growth exponent near its critical value produces a $1/t$ tail with an explicit constant, and hence an infinite conditional mean waiting time, although explosion is certain given a positive exponent. Separate extensions show that a fixed deployment delay prevents explosion in the specified delayed-feedback models, that power-law diversion of effort shifts the growth threshold, and that in a deterministic oversight race a fixed allocation share does not change the asymptotic regime, except at a critical equality of exponents. Numerical illustrations show how timing and detectability depend on the granularity of improvements. They are not empirical forecasts of AI development.
\end{abstract}

\qcMeta{Keywords}{intelligence explosion; pure birth process; explosion time; $q$-series; exponential functionals; perpetuities; hypothesis testing; delay differential equations}
\qcMeta{MSC 2020}{60J27, 60G51, 33D15, 62F03, 34K25, 91B62}

\vspace{4pt}
\noindent\begin{minipage}{\textwidth}
\includegraphics[width=\textwidth]{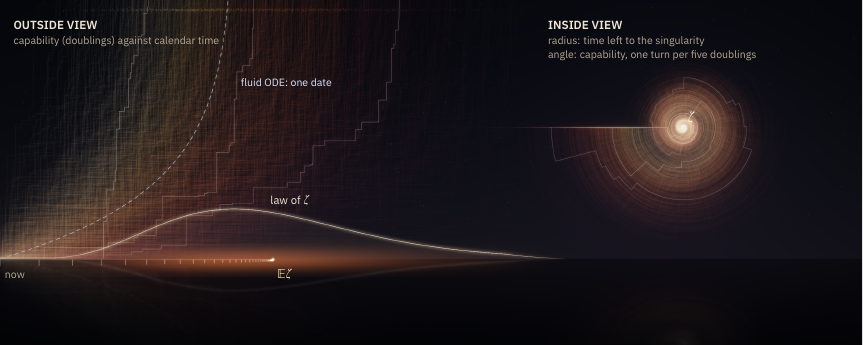}
\captionof{figure}{\emph{The singularity has a distribution.} \textbf{Left:} 1\,400 simulated histories, coloured by explosion time, rise as staircases of ideas and leave the sky at their own explosion times; the fluid ODE (dashed) has a single date. The horizon glows with the exact law of $\zeta$ (here Gamma, \cref{cor:gamma}), and its ticks mark $\E T_n\to\E\zeta$. \textbf{Right:} the same histories with radius $\zeta-t$ and angle $\propto\log A_t$, each a stepped logarithmic spiral into the singularity (\cref{sec:midnight}). Details in \cref{app:numerics}.}
\label{fig:hero}
\end{minipage}

\vspace{6pt}
{\setcounter{tocdepth}{1}\small\tableofcontents}

\section{Introduction}\label{sec:intro}

\subsection{A feedback law and what it leaves out}

In 1960, von Foerster, Mora and Amiot fitted a hyperbola $N(t)\propto(t^\ast-t)^{-1}$ to two millennia of world population and dated its singularity to \emph{Friday, 13 November, A.D.\ 2026} \cite{vFMA60}. That date is now a month away, and the world population remains finite. Hyperbolic growth is back all the same, this time as a hypothesis about software.

Chan et al.\ \cite{CASP26} argue that the automation of AI research and development could trigger an \emph{intelligence explosion}, in which AI progress, driven by AI itself, speeds up so much that years of advances happen within months. The mechanism is a feedback loop: AI systems enlarge the effective research workforce, and that workforce produces better AI systems. The quantitative core of their supplementary materials is the idea production function of semi-endogenous growth theory \cite{Jon95,BJVW20}, $\dot A=A^{1-\beta}E^{\lambda}$. Here $A$ is software quality, $E$ is effective research labour, $\lambda$ measures returns to scale in labour and $\beta$ measures how fast ideas get harder to find. Under full automation $E=kA$, and
\begin{equation}\label{eq:ode}
\dot A = c\,A^{1+\eps},\qquad \eps:=\lambda-\beta=\beta(r-1),\qquad r:=\lambda/\beta .
\end{equation}
When $r>1$, \eqref{eq:ode} reaches infinity at the finite time $T^\ast=(\eps cA_0^{\eps})^{-1}$. The authors read this ``infinite progress in finite time'' as the point where the model breaks down, not as a prediction \cite[SM]{CASP26}. Their central estimates are $\lambda=1.40$ and $\beta=1.01$, so that $\eps=\NumEps$ and $r=\lambda/\beta=\NumR$; we use these values for illustration only.

The deterministic law cannot say how uncertain the blow-up date is, how early an observer could tell acceleration from steady exponential growth, or what a deployment lag does to the singularity. These are the questions of this paper. We keep the drift of \eqref{eq:ode}, replace its continuum of progress by discrete, randomly timed improvements, and ask what can be computed exactly.

\subsection{AI-generated mathematics and the research feedback loop}\label{sec:aimath}

Recent developments in AI-assisted mathematical research make the feedback mechanism in \eqref{eq:ode} more concrete, while showing how difficult it is to measure. In September 2026, OpenAI reported an AI-generated analytical argument, with a Lean formalisation, for finite-time singularity formation in the three-dimensional Navier--Stokes equations with a smooth external force \cite{OAI-NS}. The Clay Mathematics Institute described the problem as apparently settled, while noting that its evaluation and credit-assignment procedures would take time \cite{CMI-NS}. The forcing assumption matters: this is not a claimed resolution of the corresponding unforced regularity question. It is also a mathematical singularity of a different kind from the one studied here.

On 6 October 2026, OpenAI released a repository of mathematical manuscripts produced by an internal model \cite{OAI-math,OAI-repo}. As released it held 722 manuscripts grouped into 372 families of related results, with computer-checkable Lean formalisations for many, but not all, of them; its maintainers caution that some unformalised results could have issues \cite{OAI-repo}. A day later three manuscripts were withdrawn after a sign error was found, and the repository now lists 719 \cite{OAI-repo}. These counts are of manuscripts, not of independent, externally certified breakthroughs, and the withdrawal shows that validation continues after release. Such distinctions matter whenever a ``new idea'' is treated as an observable event.

The connection to AI research automation is more direct, but still incomplete. In September 2026, OpenAI reported increasing use of coding agents within its research organisation: by mid-August it measured 3.1 agent-workdays of runtime per human workday, together with more experiment activity and delegation of more complex tasks \cite{OAI-RA}. These are operational indicators, not estimates of effective research labour $E$, of the returns ratio $r=\lambda/\beta$, or of the exponent $\eps=\beta(r-1)$. The same report notes confounding from increased compute and the continued importance of human steering and bottlenecks.

These observations motivate a mathematical study of research acceleration; they do not establish that a supercritical feedback loop exists. They also sharpen the distinction between producing a candidate result and validating, understanding and incorporating it. The ideal observer of \cref{sec:detect} sees every effective idea and its arrival time, whereas actual observers see manuscripts, experiments, model releases and incomplete verification records. This gap is a reason to study statistical detectability, deployment lags and the limits of extrapolation, and not to read the calibrated explosion dates below as empirical forecasts.

\subsection{Model, assumptions and contributions}

\paragraph{The model.} In state $A$, the next improvement (an ``idea'') arrives after an exponential time with rate $\kappa A^{\eps}$ and multiplies $A$ by $1+H$, where the sizes $H$ are i.i.d.\ and strictly positive. We call this a \emph{Good process}, after I.\,J.\ Good's ultraintelligent machine \cite{Goo66}. When all ideas have the same size $h$, the idea count is a pure birth process with \emph{geometric} rates $q_n=q_0\rho^n$, $\rho=(1+h)^{\eps}$, and its explosion time is
\[
\zeta=\sum_{n\ge0}\frac{e_n}{q_0\rho^n},\qquad e_n\ \text{i.i.d.\ }\Exp(1).
\]
Throughout, for $\eps>0$, $q:=1/\rho\in(0,1)$, and $\ell:=\log\rho=\eps\log(1+h)$ is the increment of the log-rate per idea. The model is one tractable stochastic realisation of \eqref{eq:ode}, not the unique one, and \cref{rem:scope} lists which assumptions support which result.

\paragraph{Results.} Write $\zb:=q_0\zeta$ for the standardised explosion time.
\begin{enumerate}[leftmargin=*,label=\textbf{\arabic*.}]
\item \textbf{Explosion criterion} (\cref{thm:dichotomy}). For every i.i.d.\ positive idea-size law in this model, explosion is almost sure if $r>1$ and impossible if $r\le1$. For equal sizes and a monotone rate function, explosion of the chain is equivalent to blow-up of its fluid equation.
\item \textbf{Exact timing law} (\cref{thm:law,thm:random,cor:gamma}). For equal sizes, $\zb$ has Laplace transform $1/(-s;q)_\infty$, moments $n!/(q;q)_n$, Mellin transform $\Gamma(1+s)(q^{1+s};q)_\infty/(q;q)_\infty$ and an alternating $q$-series distribution function. Its slow tail is $\P(\zb>t)\sim e^{-t}\sum_mp(m)q^m$, and at fixed $q$ it has the two-term small-time asymptotic $\log\P(\zb\le t)=-\frac{L^2}{2\ell}-\frac{L\log L}{\ell}+O(L)$, $L=\log(1/t)$. For random sizes $\zb$ is a perpetuity with explicit moments, and when $\log(1+H)$ is exponential (Lomax sizes), $\zb$ is exactly Gamma distributed. Most of these formulas are known (\cref{tab:contrib}); we give self-contained proofs.
\item \textbf{Residual-time self-similarity} (\cref{thm:selfsim,prop:logperiodic,prop:uncertain}). With parameters fixed and known, at every stopping time $\tau$ the remaining time $\zeta-\tau$, multiplied by the current idea rate, has the law of $\zb$, independent of the past. Relative predictive uncertainty therefore does not vanish as explosion approaches. In coordinates centred on the realised explosion time, the mean density of ideas is log-periodic with harmonics $\Gamma(1-2\pi ik/\ell)$. If the growth exponent is uncertain with positive density at the critical value, then $t\,\P(\zeta>t\mid\eps>0)$ converges to an explicit constant.
\item \textbf{Fixed-sample detection} (\cref{thm:detect,cor:warning}). For the binary experiment with known baseline rate and known increment $\ell$, decided once after the first $n$ waiting times, the minimum sample size at level $\alpha$ satisfies $n^\ast\sim(12z_\alpha^2)^{1/3}\ell^{-2/3}$, with non-asymptotic bounds on both sides.
\item \textbf{Model extensions} (\cref{thm:lag,prop:diversion,thm:race}). In a deterministic delayed-feedback model any fixed delay $D>0$ prevents blow-up and gives $\log\log A(t)\sim t\log r/D$; the corresponding stochastic model does not explode. Power-law diversion of effort changes the explosion threshold. In a deterministic race with a fixed allocation share, the asymptotic regime of the oversight-to-capability ratio is determined by exponents, except at a critical equality where the allocation enters the limit.
\end{enumerate}

\Cref{tab:contrib} separates what is classical from what may be new.

\begin{table}[t]
\caption{Contribution map. ``Known'' means that the formula appears in the cited literature; we give self-contained proofs because the application reading is the point. Novelty claims are limited to what we checked (\cref{sec:related}).}
\label{tab:contrib}
\centering\small
\renewcommand{\arraystretch}{1.16}
\begin{tabular}{@{}>{\raggedright\arraybackslash}p{0.40\textwidth}>{\raggedright\arraybackslash}p{0.24\textwidth}>{\raggedright\arraybackslash}p{0.30\textwidth}@{}}
\toprule
\textsf{Component} & \textsf{Where} & \textsf{Status}\\
\midrule
Explosion criterion; exponential-functional representation & Thm~\ref{thm:dichotomy}; Rem.~\ref{rem:lamperti} & classical structure applied to \eqref{eq:ode}\\
Laplace transform, density, integer and fractional moments; random-size moments & Thm~\ref{thm:law}\ref{it:laplace}--\ref{it:series},\,\ref{it:mellin}; Thm~\ref{thm:random} & known \cite{CPY97,DGR02,BBY04,GRZ04}\\
Slow tail from the first exponential mode & \cref{thm:law}\ref{it:slow} & explicit consequence of the known density\\
Two-term small-time asymptotic & \cref{thm:law}\ref{it:fast} & known in sharper form \cite{Aur08,BR08}; self-contained proof\\
Lomax sizes give a Gamma explosion time & \cref{cor:gamma} & classical \cite{Ver79,BDIM18}; derived from \cref{thm:random}\\
Residual-time self-similarity as a forecasting statement & \cref{thm:selfsim} & direct consequence of self-similarity\\
$\Gamma$-coefficients of the explosion-centred intensity & \cref{prop:logperiodic} & exact specialisation\\
Exact $1/t$ tail constant under parameter uncertainty & \cref{prop:uncertain} & routine Abelian argument; not found stated for this model\\
Sharp fixed-sample detection threshold & \cref{thm:detect} & statistical centrepiece of the paper\\
Delay, diversion and oversight race & \cref{sec:brakes} & separate model extensions with their own scopes\\
\bottomrule
\end{tabular}
\end{table}

\subsection{Related work and organisation}\label{sec:related}

Finite-time singularities in growth models go back at least to \cite{vFMA60}. In economics they appear in Kremer's model of population and technology \cite{Kre93} and in discussions of AI and growth \cite{AJJ19}. The modelling of a software intelligence explosion that we start from is surveyed in \cite{CASP26}, which builds on estimates of $r$ by Ho and Whitfill \cite{HW25}, on the feedback-loop analysis of Davidson et al.\ \cite{DHHK26}, and on Ord's discussion of the time between rounds of improvement \cite{Ord26}.

On the probabilistic side, the explosion time of a Good process is the exponential functional of a compound Poisson subordinator in Lamperti's representation \cite{Lam72}, and such functionals are well studied \cite{CPY97,BY05}. The Poisson case, which is our lattice case, arose in the analysis of AIMD congestion-control algorithms \cite{DGR02,GRZ04}. Guillemin, Robert and Zwart \cite{GRZ04} give its density as an alternating series, its Laplace transform as a $q$-series, and the moment formulas of \cref{thm:law}\ref{it:moments} and \cref{thm:random}. Bertoin, Biane and Yor \cite{BBY04} connected it to $q$-integrals and the log-normal moment problem and obtained its fractional moments, which give the Mellin formula \ref{it:mellin}. Perpetuities of the form in \cref{thm:random} go back at least to Vervaat \cite{Ver79}. The Gamma example of \cref{cor:gamma} is classical; it is stated explicitly, with a reference to \cite{Ver79}, in \cite[Example~2.1]{BDIM18}. The small-time asymptotics of geometrically weighted sums of i.i.d.\ positive variables are known in sharper form. Aurzada \cite{Aur08} gives the leading term, and Borovkov and Ruzankin \cite[Theorem~1.1]{BR08} give the second term together with an explicit $O(L)$ constant. \Cref{thm:law}\ref{it:fast} is a weaker form of their result, included with a short self-contained proof. We did not find \cref{prop:uncertain} stated for this model; it follows by a routine Abelian-type argument. We are not aware of earlier treatments of the residual-time invariance as a forecasting statement, of the $\Gamma$-coefficients of the explosion-centred intensity, or of the detection, delay and race results in this context. These statements are limited to the sources we checked.

\paragraph{Organisation.} \Cref{sec:model} defines Good processes, states the modelling assumptions and fixes the calibration. \Cref{sec:law} proves the explosion criterion and the exact timing law. \Cref{sec:midnight} treats forecasting from inside the explosion and parameter uncertainty, and \cref{sec:detect} the fixed-sample detection problem. \Cref{sec:brakes} collects the separate model extensions. \Cref{sec:calibration} gives numerical illustrations, and \cref{sec:open} lists limitations and open problems. \Cref{app:numerics} specifies the numerical methods and how to reproduce every number.

\section{The model and its calibration}\label{sec:model}

\begin{definition}[Good process]\label{def:good}
Let $A_0>0$, $\kappa>0$, $\eps\in\R$, and let $\mu$ be a probability measure on $(0,\infty)$, the \emph{idea-size law}. Let $(e_n)_{n\ge0}$ be i.i.d.\ $\Exp(1)$ and $(H_n)_{n\ge1}$ i.i.d.\ with law $\mu$, all independent. Put
\[
A^{(0)}:=A_0,\qquad A^{(n)}:=A^{(n-1)}(1+H_n),\qquad \tau_n:=\frac{e_n}{\kappa\,(A^{(n)})^{\eps}},\qquad T_n:=\sum_{k<n}\tau_k,
\]
and $\zeta:=\lim_n T_n\in(0,\infty]$. The \emph{Good process} is $A_t:=A^{(n)}$ for $T_n\le t<T_{n+1}$, with $A_t:=\infty$ for $t\ge\zeta$. Its idea count is $N_t:=n$ on $[T_n,T_{n+1})$, $(\F_t)$ is the usual augmentation of the filtration generated by $(A_t)$, and $\zeta$ is its \emph{explosion time} ($\zeta=\infty$ means no explosion).
\end{definition}

In state $a$ the process waits an exponential time with rate $\kappa a^{\eps}$ and then jumps to $a(1+H)$. It is a pure-jump Markov process, minimal in the sense that nothing is defined after the first explosion. We call $\zeta$ the explosion time or the singularity time of the model; it is not a separately defined capability threshold.

\begin{remark}[Scope of the model]\label{rem:scope}
We study a tractable stochastic realisation of the feedback law \eqref{eq:ode}, not a unique stochastic model implied by that law. The assumptions of independent positive multiplicative improvements and exponential holding times make the clock explicit. Our robustness statements concern the specified family of processes; they do not establish robustness to arbitrary dependence, negative shocks, time-varying returns or alternative observation schemes. The assumptions are used as follows.
\begin{itemize}[leftmargin=*]
\item Geometric rates together with exponential holding times produce the $q$-products of \cref{thm:law}.
\item The memoryless property of the holding times is what makes the residual-time statement of \cref{thm:selfsim} hold at \emph{arbitrary} stopping times, not just at idea times.
\item The explosion criterion of \cref{thm:dichotomy}\ref{it:dich-power} needs i.i.d.\ strictly positive sizes but no moment condition on $H$; matching a finite level drift through $\E H$ (\cref{box:calibration}, convention A) does need $\E H<\infty$.
\item What counts as one ``idea'' is a modelling choice. Grouping several exponential waiting stages into one idea generally produces a non-exponential holding time, so a coarser idea definition is not automatically another Good process.
\end{itemize}
\end{remark}

\paragraph{The lattice case.} When $H\equiv h$ we write
\[
q_n:=\kappa A_0^{\eps}(1+h)^{\eps n}=q_0\rho^n,\qquad \rho:=(1+h)^{\eps},\qquad q:=\rho^{-1},\qquad \ell:=\log\rho=\eps\log(1+h).
\]
The waiting time in state $n$ is $e_n/q_n$, and
\begin{equation}\label{eq:zeta}
\zeta=\sum_{n\ge0}\frac{e_n}{q_0\rho^n},\qquad \zb:=q_0\zeta=\sum_{n\ge0}e_nq^n .
\end{equation}
The law of $\zb$ depends on $q$ alone; we denote it $\Lambda_q$. Thus $\zb$ measures time in units of the current expected waiting time $1/q_0$.

\begin{remark}[Lamperti representation]\label{rem:lamperti}
Let $M$ be a unit-rate Poisson process independent of $(H_i)$, and let $\xi_s:=\sum_{i\le M_s}\log(1+H_i)$, a compound Poisson subordinator. Measure time in ``ideas'' $s$ and convert it to calendar time through $\dd t=\dd s/(\kappa A^{\eps})$. Then $A_t=A_0\exp(\xi_{\sigma(t)})$, where $\sigma$ is the inverse of $s\mapsto(\kappa A_0^{\eps})^{-1}\int_0^s e^{-\eps\xi_u}\dd u$, and
\[
\zeta=(\kappa A_0^{\eps})^{-1}\int_0^\infty e^{-\eps\xi_s}\dd s .
\]
Thus $A$ is a positive self-similar Markov process in the sense of Lamperti \cite{Lam72}, with a negative index: if $A$ starts at $a$, then $(cA_{c^{\eps}t})_{t\ge0}$ has the law of $A$ started at $ca$. Moreover $\zeta$ is an \emph{exponential functional} of the subordinator $\eps\xi$ \cite{CPY97,BY05}.
\end{remark}

\subsection{Calibration: what is held fixed}\label{sec:calib}

A drift law does not determine a jump process, and the phrase ``doubling time'' has at least three meanings for one. \Cref{box:calibration} separates them. All calibrated numbers in this paper use convention~B.

\begin{qcpanel}[\label{box:calibration}]{Three meanings of \emph{doubling time}}
Write $\mathcal L$ for the generator of the lattice Good process: $\mathcal Lf(a)=\kappa a^{\eps}\big(f(a(1+h))-f(a)\big)$.

\smallskip\noindent\textbf{A. Local level drift.} $\mathcal L\operatorname{id}(a)=\kappa h\,a^{1+\eps}$. Matching \eqref{eq:ode} requires $\kappa=c/h$, so $q_0=cA_0^{\eps}/h$. (For random sizes, $\kappa\,\E H$ replaces $\kappa h$, which needs $\E H<\infty$.)

\smallskip\noindent\textbf{B. Local log drift} (used throughout). $g:=\mathcal L\log(A_0)=q_0\log(1+h)$. We set $g=\log2/d_0$ with $d_0=4.5$ months, so $g=\NumG$ per month and
\[
q_0=\frac{g}{\log(1+h)} .
\]
Here $d_0$ is a \emph{log-drift-equivalent doubling-time proxy}; it is not in general the expected time to reach $2A_0$. We say ``local'' because unconditional expectations such as $\E\log A_t$ are infinite for every $t>0$ once explosion before $t$ has positive probability.

\smallskip\noindent\textbf{C. First-passage doubling.} For the ODE with $cA_0^{\eps}=g_0$, the actual time to reach $2A_0$ is $(1-2^{-\eps})/(\eps g_0)$, which equals $\log2/g_0$ only in the limit $\eps\to0$. For the lattice process,
\[
\E T(2A_0)=\frac{1-q^{m}}{q_0(1-q)},\qquad m=\Big\lceil\frac{\log 2}{\log(1+h)}\Big\rceil .
\]

\smallskip\noindent\textbf{Numbers at $\eps=\NumEps$, $d_0=4.5$.} With $g_0=g$, the ODE blows up at $T^\ast=1/(\eps g)=\NumTfluid$ months, and its actual first doubling takes $\NumODEdouble$ months. If instead the ODE's actual first doubling is fixed at $4.5$ months, it blows up at $\NumTfluidC$ months. Under convention~B the mean first doubling of the lattice process is $\NumFirstDblA$, $\NumFirstDblAb$, $\NumFirstDblB$, $\NumFirstDblC$ and $\NumFirstDblD$ months for $h=0.01,0.05,0.1,1,9$. Only $h=1$ gives exactly $d_0$; the $h=9$ process overshoots $2A_0$ at its first tenfold jump, so its first-doubling time is the whole initial wait.

\smallskip\noindent\textbf{A versus B.} If the fluid coefficient $cA_0^{\eps}=g$ is held fixed, A gives $q_0=g/h$ and B gives $q_0=g/\log(1+h)$; they differ for every $h>0$. If instead $cA_0^{\eps}=1/d_0$, they coincide at $h=1$, but then the level-drift ODE blows up at $d_0/\eps=\NumTlevel$ months.
\end{qcpanel}

The ODE \eqref{eq:ode} is the small-idea limit of Good processes in the following sense.

\begin{proposition}[Fluid limit of hitting times]\label{prop:fluid}
Consider the lattice case with $\kappa=c/h$ (convention A) for fixed $c>0$, $\eps\in\R$ and $A_0>0$. For $a>A_0$ let $T^h(a):=\inf\{t:A_t\ge a\}$ and $t(a):=\int_{A_0}^a \dd x/(cx^{1+\eps})$, the time the solution of \eqref{eq:ode} takes to reach $a$. Then $T^h(a)\to t(a)$ in $L^2$ as $h\downarrow0$. If $\eps>0$, then also $\zeta^h\to T^\ast=(\eps cA_0^{\eps})^{-1}$ in $L^2$.
\end{proposition}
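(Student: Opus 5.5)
The proof is a direct mean–variance computation. In the lattice case the hitting time of level $a$ is a finite sum of independent exponential waiting times, so it suffices to show that its mean tends to $t(a)$ and its variance tends to $0$; then
\[
\E\big(T^h(a)-t(a)\big)^2=\Var T^h(a)+\big(\E T^h(a)-t(a)\big)^2\to0 .
\]

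Put $m_h(a):=\lceil \log(a/A_0)/\log(1+h)\rceil$, the number of ideas needed to reach $a$. Then $T^h(a)=T_{m_h(a)}=\sum_{n<m_h(a)}e_n/q_n$ with $q_n=(c/h)A_0^{\eps}(1+h)^{\eps n}$. Write $x_n:=A_0(1+h)^n$, so that $x_{n+1}-x_n=hx_n$. The mean is then a left Riemann sum:
\[
\E T^h(a)=\sum_{n<m_h(a)}\frac{h}{cA_0^{\eps}(1+h)^{\eps n}}=\sum_{n<m_h(a)}\frac{x_{n+1}-x_n}{c\,x_n^{1+\eps}} .
\]
This is a Riemann sum for $\int \dd x/(cx^{1+\eps})$ on the partition $A_0=x_0<x_1<\dots<x_{m_h(a)}$, whose mesh is at most $h\,x_{m_h(a)}\le h(1+h)a\to0$. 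The last point satisfies $x_{m_h(a)}\in[a,a(1+h))$, so the overshoot contributes only $O(h)$. Since the integrand is continuous on the compact interval $[A_0,2a]$, we get $\E T^h(a)\to t(a)$. The variance is
\[
\Var T^h(a)=\sum_{n<m_h(a)}q_n^{-2}=\sum_{n<m_h(a)}\frac{h^2}{c^2x_n^{2\eps}} .
\]
It has $m_h(a)=O(1/h)$ terms, each of size $O(h^2)$ uniformly in $n$, because $x_n$ stays in $[A_0,2a]$. Hence $\Var T^h(a)=O(h)\to0$. This argument works for every real $\eps$, including $\eps\le0$, because only finitely many jumps are involved.

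For the statement about $\zeta^h$ when $\eps>0$, use the full series $\zeta^h=\sum_{n\ge0}e_n/q_n$. Its mean is a geometric series:
\[
\E\zeta^h=\frac{h}{cA_0^{\eps}}\cdot\frac{1}{1-(1+h)^{-\eps}} .
\]
Since $1-(1+h)^{-\eps}=\eps h+O(h^2)$, this tends to $1/(\eps cA_0^{\eps})=T^\ast$. Its variance is
\[
\Var\zeta^h=\frac{h^2}{c^2A_0^{2\eps}}\cdot\frac{1}{1-(1+h)^{-2\eps}}\sim\frac{h}{2\eps c^2A_0^{2\eps}}\to0 .
\]
The same bias–variance decomposition then gives $L^2$ convergence.

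The only delicate point is the Riemann-sum step for $T^h(a)$. We need the rounding of $m_h(a)$ and the overshoot past $a$ to vanish, and the mesh to be uniformly small. Both follow from $x_{m_h(a)}<a(1+h)$ and from the uniform continuity of $x^{-1-\eps}$ on a compact interval bounded away from $0$. Everything else is explicit geometric-series algebra.
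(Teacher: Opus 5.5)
Your proof is correct and follows the paper's argument essentially line by line. Both treat the mean as a left Riemann sum on the geometric grid $x_k=A_0(1+h)^k$, show that the variance vanishes, conclude with the bias--variance decomposition, and handle $\zeta^h$ for $\eps>0$ by the explicit geometric series. The one cosmetic difference is that you bound $\Var T^h(a)$ by (number of terms)$\times$(largest term)$=O(h)$, whereas the paper uses $\Var T_n\le(\max_{k<n} q_k^{-1})\,\E T_n$.
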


\begin{proof}
Let $x_k:=A_0(1+h)^k$ and $n:=\min\{k:x_k\ge a\}$, so that $T^h(a)=T_n=\sum_{k<n}e_k/q_k$ with $q_k=c\,x_k^{\eps}/h$. Since $x_{k+1}-x_k=hx_k$,
\[
\E T_n=\sum_{k<n}\frac{h}{c\,x_k^{\eps}}=\sum_{k<n}\frac{x_{k+1}-x_k}{c\,x_k^{1+\eps}}
\]
is a Riemann sum for $\int_{A_0}^{x_n}\dd x/(cx^{1+\eps})$ with mesh at most $ha(1+h)$. Since $x_n\in[a,a(1+h))$, we get $\E T_n\to t(a)$. Moreover $\Var T_n=\sum_{k<n}q_k^{-2}\le(\max_{k<n}q_k^{-1})\,\E T_n$, and $\max_{k<n}q_k^{-1}\le h\max(A_0^{-\eps},a^{-\eps})/c\to0$. Hence $\E|T_n-t(a)|^2=\Var T_n+(\E T_n-t(a))^2\to0$. For $\eps>0$, \eqref{eq:zeta} gives
\[
\E\zeta^h=\frac{h}{cA_0^{\eps}(1-(1+h)^{-\eps})}\to\frac1{\eps cA_0^\eps},\qquad
\Var\zeta^h=\frac{h^2}{c^2A_0^{2\eps}(1-(1+h)^{-2\eps})}\to0. \qedhere
\]
\end{proof}

The proposition concerns hitting times; it is not a path-space limit theorem. Under convention~B the same limit holds, because $h/\log(1+h)\to1$.

\begin{figure}[t]
\centering
\includegraphics[width=\textwidth]{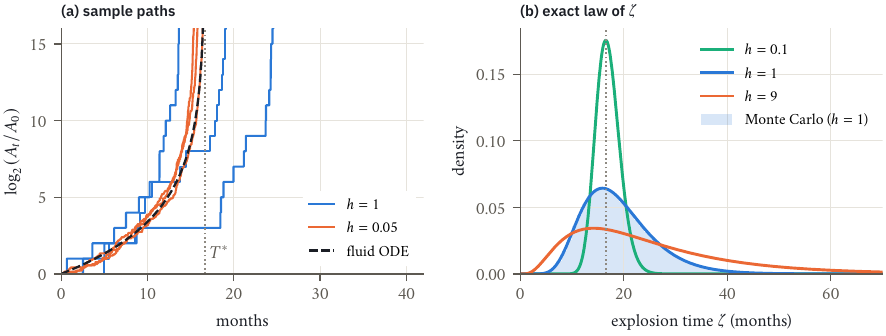}
\caption{(a) Sample paths of $\log_2 A_t$ at $\eps=\NumEps$ under convention~B (local log drift $g=\log2/4.5$ per month held fixed), for doubling-sized ideas ($h=1$, whose first doubling is expected after exactly $4.5$ months) and small ideas ($h=0.05$, whose first doubling is expected after $\NumFirstDblAb$ months). The dashed curve is the fluid ODE with $cA_0^{\eps}=g$, the small-idea limit, which blows up at $T^\ast=\NumTfluid$ months. (b) Exact density of the explosion time (\cref{thm:law}\ref{it:series}) for three idea sizes under convention~B, with a Monte Carlo histogram for $h=1$.}
\label{fig:paths}
\end{figure}

\section{Explosion criterion and exact timing law}\label{sec:law}

\begin{theorem}[Explosion criterion]\label{thm:dichotomy}
\leavevmode
\begin{enumerate}[label=(\alph*)]
\item\label{it:dich-general} \emph{Lattice sizes, arbitrary rates.} Let $h>0$, let $\nu:(0,\infty)\to(0,\infty)$ be any function, and consider the pure birth chain with rates $q_n=\nu(A_0(1+h)^n)$, i.e.\ the lattice process with rate function $\nu$ in place of $\kappa a^{\eps}$. Then $\P(\zeta<\infty)=1$ if $\sum_n 1/q_n<\infty$, and $\P(\zeta=\infty)=1$ otherwise. If $\nu$ is monotone, then $\sum_n1/q_n<\infty$ if and only if
\[
\int_{A_0}^\infty\frac{\dd x}{x\,\nu(x)}<\infty,
\]
which is the Osgood condition for finite-time blow-up of the fluid equation $\dot a=h\,a\,\nu(a)$.
\item\label{it:dich-power} \emph{Power rates, arbitrary sizes.} For the Good process with any i.i.d.\ strictly positive idea-size law $\mu$: if $\eps>0$ then $\E[\zeta^m]<\infty$ for every $m\ge1$, so in particular $\zeta<\infty$ a.s.; if $\eps\le0$ then $\zeta=\infty$ a.s. No moment condition on $\mu$ is needed.
\end{enumerate}
\end{theorem}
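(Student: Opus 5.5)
For part~\ref{it:dich-general} we use the classical Reuter/Feller criterion for pure birth chains, proved directly. Since $\zeta=\sum_n e_n/q_n$ with the $e_n$ independent $\Exp(1)$, we have $\E\zeta=\sum_n1/q_n$, so $\sum_n 1/q_n<\infty$ forces $\zeta<\infty$ a.s. For the converse we compute the Laplace transform
\[
\E e^{-\zeta}=\prod_n\frac{q_n}{1+q_n}=\prod_n\Bigl(1+\frac1{q_n}\Bigr)^{-1}.
\]
If $\sum_n 1/q_n=\infty$, this product diverges to $0$, so $\zeta=\infty$ a.s. (The elementary inequality $\prod(1+x_n)\ge1+\sum x_n$ suffices here.)

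For the integral test, substitute $x=A_0(1+h)^u$, so that $\dd x/x=\log(1+h)\dd u$. The integral becomes $\log(1+h)\int_0^\infty \dd u/\nu(A_0(1+h)^u)$. If $\nu$ is monotone, $u\mapsto1/\nu(A_0(1+h)^u)$ is monotone, and the usual integral comparison bounds the sum and the integral against each other up to one boundary term. That term is finite because $\nu>0$. Finally, the Osgood statement is just separation of variables: the fluid solution reaches infinity at time $\int_{A_0}^\infty\dd x/(h x\nu(x))$.

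For part~\ref{it:dich-power} we use $\zeta=\kappa^{-1}A_0^{-\eps}\sum_n e_n\,\Pi_n$, where $\Pi_n:=\prod_{k\le n}(1+H_k)^{-\eps}$ and the $e_n$ are independent of the $H$'s.

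\emph{Case $\eps\le0$.} Here $(1+H_k)^{-\eps}\ge1$, so $\Pi_n\ge1$ and $\zeta\ge\kappa^{-1}A_0^{-\eps}\sum_n e_n=\infty$ a.s. by the strong law of large numbers.

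\emph{Case $\eps>0$.} Set $\gamma:=\E(1+H)^{-\eps}$. Because $H>0$ a.s., we have $\gamma<1$, with no moment condition on $\mu$ since the variable is bounded by $1$. By Minkowski's inequality in $L^m$,
\[
\|\zeta\|_m\le\kappa^{-1}A_0^{-\eps}\sum_n\|e_n\|_m\,\|\Pi_n\|_m,\qquad \|\Pi_n\|_m=\bigl(\E(1+H)^{-m\eps}\bigr)^{(n+1)/m}.
\]
Again $\gamma_m:=\E(1+H)^{-m\eps}<1$ by strict positivity of $H$. The series is therefore geometric with ratio $\gamma_m^{1/m}<1$ and converges, so $\E\zeta^m<\infty$ for every $m\ge1$, and in particular $\zeta<\infty$ a.s.

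The only delicate point is the converse direction in part~\ref{it:dich-general}, namely avoiding any appeal to Kolmogorov's three-series theorem. The product formula handles it cleanly. Everything else is bookkeeping, together with noting that the strict inequality $\gamma_m<1$ uses only $\mu((0,\infty))=1$.
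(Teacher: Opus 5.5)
Your proposal is correct and follows the paper's proof essentially step for step. In part (a) you use $\E\zeta=\sum_n 1/q_n$ together with the product formula for $\E e^{-\zeta}$; your bound $\prod(1+x_n)\ge 1+\sum x_n$ replaces the paper's $\log(1+x)\ge x/2$ and is equally valid. You then use the substitution $x=A_0(1+h)^u$ with a monotone sum--integral comparison, and in part (b) Minkowski's inequality in $L^m$ plus the comparison with $\kappa^{-1}A_0^{-\eps}\sum_n e_n$. One harmless slip: since the $H_k$ are indexed from $1$, $\Pi_n$ has $n$ factors, so $\|\Pi_n\|_m=\gamma_m^{n/m}$ rather than $\gamma_m^{(n+1)/m}$, which does not affect the geometric convergence.
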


\begin{proof}
\ref{it:dich-general} Here $\zeta=\sum_n e_n/q_n$ with deterministic $q_n$. If $\sum 1/q_n<\infty$ then $\E\zeta<\infty$. If $\sum 1/q_n=\infty$, then $\sum_n\log(1+1/q_n)=\infty$ (use $\log(1+x)\ge x/2$ for $x\le1$), so $\E e^{-\zeta}=\prod_n(1+1/q_n)^{-1}=0$ and $\zeta=\infty$ a.s. For the integral test, suppose $\nu$ is nondecreasing. Then $u\mapsto1/\nu(A_0(1+h)^u)$ is nonincreasing, so its sum over $u\in\N$ and its integral over $[0,\infty)$ converge together. The substitution $x=A_0(1+h)^u$ turns the integral into $(\log(1+h))^{-1}\int_{A_0}^\infty\dd x/(x\nu(x))$. If $\nu$ is nonincreasing, both the sum and the integral diverge.

\ref{it:dich-power} Let $Y_i:=(1+H_i)^{-\eps}$. Then $\zb:=\kappa A_0^\eps\zeta=\sum_n e_n\prod_{i\le n}Y_i$. If $\eps>0$, then $Y_i\in(0,1)$ a.s., so $y_m:=\E Y^m<1$ for every $m$. By Minkowski's inequality in $L^m$ and independence,
\[
\|\zb\|_m\le\sum_{n\ge0}\|e_n\|_m\Big\|\prod_{i\le n}Y_i\Big\|_m=(m!)^{1/m}\sum_{n\ge0}y_m^{n/m}<\infty .
\]
If $\eps\le0$, then $Y_i\ge1$ and $\zb\ge\sum_ne_n=\infty$ a.s.
\end{proof}

\begin{corollary}[The threshold is $r=1$]\label{cor:threshold}
In the parametrisation of \eqref{eq:ode}, with $\beta>0$, a Good process explodes almost surely if $r>1$ and never explodes if $r\le1$, for every i.i.d.\ strictly positive idea-size law and every $\kappa$. Within this model, randomness changes when the explosion happens, but not whether it does.
\end{corollary}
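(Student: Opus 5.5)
The corollary follows from \cref{thm:dichotomy}\ref{it:dich-power} once we translate its condition on the sign of $\eps$ into a condition on $r$. By \eqref{eq:ode}, $\eps=\lambda-\beta=\beta(r-1)$ with $r=\lambda/\beta$. Since $\beta>0$, the factor $\beta$ does not affect the sign, so $\eps>0$ exactly when $r>1$, and $\eps\le0$ exactly when $r\le1$. This equivalence is the only place where the hypothesis $\beta>0$ is used. Without it the correspondence could reverse or break down; for instance, $\beta=0$ leaves $r$ undefined.

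Next we apply \cref{thm:dichotomy}\ref{it:dich-power} to a Good process with an arbitrary $\kappa>0$, $A_0>0$ and an arbitrary i.i.d.\ law $\mu$ on $(0,\infty)$. If $r>1$, then $\eps>0$, so $\E[\zeta^m]<\infty$ for every $m\ge 1$. In particular $\E\zeta<\infty$, hence $\zeta<\infty$ almost surely, and the process explodes almost surely. If $r\le1$, then $\eps\le0$, so $\zeta=\infty$ almost surely. Since \cref{def:good} sets $\zeta$ to be the explosion time, this means the process never explodes.

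It remains to check that the quantifiers in the corollary are supported. The threshold does not depend on $\kappa$ or $A_0$, because these enter only through the scaling $\zb=\kappa A_0^\eps\zeta$, which is a positive finite constant times $\zeta$. It does not depend on $\mu$, because part \ref{it:dich-power} requires only that the sizes be i.i.d.\ and strictly positive, with no moment condition. The closing interpretive sentence is then immediate: whether explosion occurs is decided by the sign of $r-1$ alone, while the law of $\zeta$ depends on $\mu$ and $\kappa$.

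No step here is a real obstacle. The one point that needs care is the boundary case $r=1$, that is $\eps=0$. There $Y_i=(1+H_i)^{0}=1$, so $\zb=\sum_n e_n=\infty$ almost surely, and this case is already covered by the $\eps\le0$ branch of the theorem.
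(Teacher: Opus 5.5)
Your proposal is correct and matches the paper's reasoning. The paper gives no separate proof: the corollary follows from \cref{thm:dichotomy}\ref{it:dich-power} via $\eps=\beta(r-1)$ with $\beta>0$, so that $\eps>0\iff r>1$. Your remarks on the quantifiers over $\kappa$, $A_0$ and $\mu$, and on the boundary case $r=1$, are accurate but already contained in that theorem.
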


\begin{remark}[Saturation]\label{rem:saturation}
Part \ref{it:dich-general} also covers the working paper's observation that $r$ must eventually fall below $1$ because of physical limits \cite[note~5]{CASP26}. Any monotone rate function with $\int^\infty\dd x/(x\nu(x))=\infty$, for example one that saturates, prevents explosion almost surely. The singularity is a property of the idealised power law.
\end{remark}

We use the $q$-Pochhammer symbols $(z;q)_n:=\prod_{j=0}^{n-1}(1-zq^j)$ and $(z;q)_\infty:=\prod_{j\ge0}(1-zq^j)$, and Euler's two identities \cite[Ch.~2]{And76}, \cite[\S1.3]{GR04}:
\begin{align}
\sum_{n\ge0}\frac{z^n}{(q;q)_n}&=\frac{1}{(z;q)_\infty}\quad(|z|<1),\label{eq:euler1}\\
\sum_{n\ge0}\frac{(-1)^nq^{n(n-1)/2}z^n}{(q;q)_n}&=(z;q)_\infty\quad(z\in\C).\label{eq:euler2}
\end{align}
Recall also that $1/(q;q)_n=\sum_{m\ge0}p_{\le n}(m)q^m$, where $p_{\le n}(m)$ is the number of partitions of $m$ into parts of size at most $n$, and that $1/(q;q)_\infty=\sum_m p(m)q^m=:\mathcal P(q)$ is the partition generating function \cite[Ch.~1]{And76}.

\begin{theorem}[Exact law, lattice case]\label{thm:law}
Let $q\in(0,1)$, $\ell=\log(1/q)$, and $\zb=\sum_{n\ge0}e_nq^n$ as in \eqref{eq:zeta}.
\begin{enumerate}[label=(\roman*)]
\item\label{it:laplace} \emph{Laplace transform.} $\E e^{-s\zb}=1/(-s;q)_\infty$ for $s>-1$.
\item\label{it:moments} \emph{Moments count partitions.} For $n\in\N$,
\[
\E\zb^{\,n}=\frac{n!}{(q;q)_n},\qquad\text{equivalently}\qquad \frac{\E\zb^{\,n}}{n!}=\sum_{m\ge0}p_{\le n}(m)\,q^m .
\]
\item\label{it:series} \emph{Distribution.} For $t\ge0$,
\[
\P(\zb>t)=\frac{1}{(q;q)_\infty}\sum_{k\ge0}\frac{(-1)^kq^{k(k+1)/2}}{(q;q)_k}\,e^{-t/q^k},
\qquad
f_{\zb}(t)=\frac{1}{(q;q)_\infty}\sum_{k\ge0}\frac{(-1)^kq^{k(k-1)/2}}{(q;q)_k}\,e^{-t/q^k}.
\]
\item\label{it:slow} \emph{Slow tail.} For $t\ge0$,
\[
\Big|\P(\zb>t)-\mathcal P(q)\,e^{-t}\Big|\le\frac{(-q;q)_\infty}{(q;q)_\infty}\,e^{-t/q} .
\]
\item\label{it:fast} \emph{Small-time asymptotic.} At fixed $q$, as $t\downarrow0$, with $L:=\log(1/t)$,
\[
\log\P(\zb\le t)=-\frac{L^2}{2\ell}-\frac{L\log L}{\ell}+O(L).
\]
\item\label{it:mellin} \emph{Mellin transform.} The function $s\mapsto\E\zb^{\,s}$ is entire, and for all $s\in\C$
\[
\E\zb^{\,s}=\frac{\Gamma(1+s)\,(q^{1+s};q)_\infty}{(q;q)_\infty},
\]
where the right-hand side has removable singularities at $s=-1,-2,\dots$. In particular, for $n\ge1$,
\[
\E\big[\zb^{-n}\big]=\frac{\ell}{(n-1)!}\prod_{m=1}^{n-1}(\rho^m-1),\qquad\text{so}\qquad \E[1/\zb]=\ell .
\]
\end{enumerate}
\end{theorem}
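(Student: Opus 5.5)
The natural starting point is the Laplace transform. Everything else follows from it by expanding in the $q$-series identities \eqref{eq:euler1}--\eqref{eq:euler2}. By independence, $\E e^{-s\zb}=\prod_{n\ge0}\E e^{-sq^ne_n}=\prod_n(1+sq^n)^{-1}=1/(-s;q)_\infty$ for $s>-1$, since each factor is finite when $sq^n>-1$. This gives \ref{it:laplace}.

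For \ref{it:moments}, I would expand $1/(-s;q)_\infty$ using \eqref{eq:euler1} with $z=-s$, valid for $|s|<1$, to get $\sum_n(-s)^n/(q;q)_n$. Comparing coefficients with $\sum_n(-s)^n\E\zb^n/n!$ gives $n!/(q;q)_n$. This comparison is justified because the transform is analytic on $|s|<1$: every moment is finite by \cref{thm:dichotomy}\ref{it:dich-power}, and the mgf exists for $|s|<1$. The partition form is the recalled expansion of $1/(q;q)_n$.

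For \ref{it:series}, I would invert by partial fractions. The transform $1/\prod_k(1+sq^k)$ has simple poles at $s=-q^{-k}$. Evaluating the product at $s=-q^{-k}$ and using $\prod_{j<k}(1-q^{j-k})=(-1)^kq^{-k(k+1)/2}(q;q)_k$, the residue computation yields the coefficients of the candidate density $\sum_k c_k q^{-k}e^{-t/q^k}$, with
\[
c_k=\frac{(-1)^kq^{k(k+1)/2}}{(q;q)_k(q;q)_\infty}.
\]
Rather than justify an infinite partial-fraction expansion, I would verify the formula directly. The candidate series converges absolutely because of the $q^{k(k-1)/2}$ factor. Its Laplace transform is $\sum_k c_k/(1+sq^k)$. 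Expanding $1/(1+sq^k)=\sum_m(-s)^mq^{km}$ for small $s$ and summing over $k$ with \eqref{eq:euler2} at $z=q^{m+1}$ gives $(q^{m+1};q)_\infty/(q;q)_\infty=1/(q;q)_m$, which matches \ref{it:moments}. Uniqueness of the Laplace transform then finishes the argument. The survival function follows by integrating term by term.

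For \ref{it:slow}, the $k=0$ term of the survival series is exactly $\mathcal P(q)e^{-t}$. For the remaining terms I would bound $e^{-t/q^k}\le e^{-t/q}$ and sum the absolute values. This gives
\[
\sum_{k\ge1}\frac{q^{k(k+1)/2}}{(q;q)_k}\le(-q;q)_\infty,
\]
using \eqref{eq:euler2} at $z=-q$, which yields $(-q;q)_\infty=\sum_k q^{k(k+1)/2}/(q;q)_k$.

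For \ref{it:mellin}, I would use the density series and compute term by term:
\[
\int_0^\infty t^sq^{-k}e^{-t/q^k}\,\dd t=\Gamma(1+s)\,q^{ks}.
\]
This is valid for $\Re s>-1$, where the sum converges absolutely. Then \eqref{eq:euler2} at $z=q^{1+s}$ gives $(q^{1+s};q)_\infty$. To extend to all $s\in\C$, I would note that the density is bounded near $0$; indeed $f(0)=0$ by \eqref{eq:euler2} at $z=1$, and in fact all derivatives vanish at $0$ because $f(t)\le\P(\zb\le t)/t$-type bounds are superpolynomially small, as item \ref{it:fast} shows. So $\E\zb^s$ is finite for every $s$, hence entire, and the identity extends by analytic continuation. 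The zeros of $(q^{1+s};q)_\infty$ at negative integers cancel the poles of $\Gamma$. For $s=-n$, I would take the limit: the factor $1-q^{1+s+n-1}$ vanishes at $s=-n$, and $\Gamma(1+s)(1-q^{s+n})$ tends to $\ell(-1)^{n-1}/(n-1)!$. The remaining factors $\prod_{j\ne n-1}(1-q^{1-n+j})$ produce $\prod_{m=1}^{n-1}(\rho^m-1)$ with signs, and after simplification the result is $\E[1/\zb]=\ell$.

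The main obstacle is \ref{it:fast}. For the upper bound, I would use Chernoff on the lower tail:
\[
\P(\zb\le t)\le e^{\lambda t}\,\E e^{-\lambda\zb}=e^{\lambda t}/(-\lambda;q)_\infty.
\]
Estimating $\log(-\lambda;q)_\infty=\sum_n\log(1+\lambda q^n)\approx\frac{(\log\lambda)^2}{2\ell}+O(\log\lambda)$ and choosing $\lambda\approx L/(t\ell)$ gives $-\frac{L^2}{2\ell}-\frac{L\log L}{\ell}+O(L)$. For the lower bound, I would pick $N\approx(L-\log L)/\ell$. I would require $e_n\le tq^{-n}/2$ for $n<N$, which has probability about $\prod_{n<N}(tq^{-n}/2)$, and require the tail $\sum_{n\ge N}e_nq^n\le t/2$, which has probability bounded below by a constant via Markov, since $q^N\approx t/L$. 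Computing $\sum_{n<N}(\log t+n\ell)$ should match the two leading terms up to $O(L)$. The delicate point is tuning $N$ so that the $L\log L$ term agrees on both sides.
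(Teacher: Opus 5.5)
\textbf{Verdict.} Your treatment of (i)--(iv), (vi) and the upper bound in (v) is correct, but the lower bound in (v) has a genuine gap: your event does not imply $\zb\le t$.

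\textbf{What works.} Items (i), (ii), (iv), (vi) and the Chernoff upper bound in (v) are essentially the paper's arguments. For (iii) you take a different but valid route. The paper passes to the limit, by dominated convergence, in the hypoexponential formula for the truncations $\sum_{k<n}e_kq^k$. You instead read the coefficients off the poles of $1/(-s;q)_\infty$ and certify the candidate directly: you compute its Laplace transform using \eqref{eq:euler2} at $z=q^{m+1}$, then invoke uniqueness for finite signed measures. The candidate is absolutely integrable since $\sum_k q^{k(k+1)/2}/(q;q)_k<\infty$. Your route avoids limits of finite products. The paper's route yields the survival function first and needs no uniqueness theorem. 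In (vi), the extension to all $s$ needs only $\E\zb^{-x}=\int_0^\infty x\,t^{-x-1}\P(\zb\le t)\dd t<\infty$, which the Chernoff bound supplies. Your remark about $f(t)\le\P(\zb\le t)/t$ and vanishing derivatives is unnecessary, and that inequality actually goes the wrong way near $0$. The correct pointwise bound is $f(t)\le\P(\zb\le t/q)$, from $\zb=e_0+q\zb'$.

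\textbf{The gap in (v).} Requiring $e_n\le tq^{-n}/2$ for each $n<N$ only gives $\sum_{n<N}e_nq^n\le Nt/2$, not $t/2$, so your event is not contained in $\{\zb\le t\}$. Your own computation would expose this. With $\ell N=L-\log L$,
\[
\sum_{n<N}(\log t+n\ell)=-\frac{L^2-\log^2L}{2\ell}+O(L)=-\frac{L^2}{2\ell}+O(L).
\]
This has no $L\log L$ term at all, so it would contradict the upper bound you just proved.

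The tail step also fails. This $N$ gives $q^N=e^{-L+\log L}=tL$, not $t/L$. Hence $\sum_{n\ge N}e_nq^n$ has mean $tL/(1-q)\gg t$, and Markov's inequality gives nothing.

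\textbf{The missing idea.} The $L\log L$ term does not come from tuning $N$; it comes from splitting the budget $t/2$ among the $N\asymp L/\ell$ leading summands.
\begin{itemize}
\item Take $N=\lceil(L+\log\frac{4}{1-q})/\ell\rceil$. Then $q^N\le(1-q)t/4$, and Markov gives $\P(\sum_{n\ge N}e_nq^n\le t/2)\ge\frac12$.
\item Require $e_k\le x_k:=tq^{-k}/(2N)$ for $k<N$, so the head is at most $t/2$.
\item Since $1-e^{-x}\ge\frac12\min(x,1)$, index $k$ costs at most $\log2+\max(0,M-k\ell)$, where $M:=L+\log(2N)=L+\log L+O(1)$.
\item Summing gives $\log\P(\zb\le t)\ge-\frac{M^2}{2\ell}-O(L)=-\frac{L^2}{2\ell}-\frac{L\log L}{\ell}+O(L)$.
\end{itemize}
The extra $\log(2N)\approx\log L$ in each of the $\asymp L/\ell$ active factors is exactly what produces the second term.
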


The leading term in \ref{it:fast} is of log-normal type, but the law is not log-normal: the $L\log L$ term is part of the statement. The asymptotic is at fixed $q$ and is not claimed to be uniform in the separate small-idea limit $q\uparrow1$. Item \ref{it:fast} is a weaker form of a known result. For exponential summands, Borovkov and Ruzankin \cite[Theorem~1.1]{BR08} prove
\[
\log\P(\zb\le t)=-\frac{L^2}{2\ell}-\frac{L\log L}{\ell}-\frac{(\ell-2\log\ell-2)\,L}{2\ell}+o(L),
\]
and our high-precision evaluations at $q=\frac12$ are consistent with this (\cref{app:numerics}).

\begin{proof}
\ref{it:laplace} By independence, $\E e^{-s\zb}=\prod_k\E e^{-sq^ke_k}=\prod_k(1+sq^k)^{-1}$ for $s>-1$.

\ref{it:moments} By \eqref{eq:euler1} with $z=-s$, $\E e^{-s\zb}=\sum_n(-s)^n/(q;q)_n$ for $|s|<1$. Since $\zb$ has exponential moments of small order, this expansion equals $\sum_n(-s)^n\E\zb^{\,n}/n!$. The partition statement is the expansion $\prod_{j=1}^n(1-q^j)^{-1}=\prod_{j=1}^n\sum_{a_j\ge0}q^{ja_j}$.

\ref{it:series} Let $\zb_n:=\sum_{k<n}e_kq^k$. Its summands are independent exponentials with distinct rates $q^{-k}$, so by the classical formula for the hypoexponential law,
\[
\P(\zb_n>t)=\sum_{k<n}c^{(n)}_k e^{-t/q^k},\qquad c^{(n)}_k:=\prod_{\substack{j<n\\ j\neq k}}\frac{1}{1-q^{j-k}} .
\]
The factors with $j<k$ equal $(1-q^{-m})^{-1}=-q^m/(1-q^m)$ with $m=k-j\in\{1,\dots,k\}$, and those with $j>k$ equal $(1-q^m)^{-1}$ with $m=j-k\in\{1,\dots,n-1-k\}$. Hence
\[
c^{(n)}_k=\frac{(-1)^kq^{k(k+1)/2}}{(q;q)_k\,(q;q)_{n-1-k}}\xrightarrow[n\to\infty]{}\frac{(-1)^kq^{k(k+1)/2}}{(q;q)_k\,(q;q)_\infty},
\]
and $|c^{(n)}_k|\le q^{k(k+1)/2}/(q;q)_\infty^2$, which is summable. Since $\zb_n\uparrow\zb$, $\P(\zb_n>t)\uparrow\P(\zb>t)$, and dominated convergence gives the series. As a check, at $t=0$ the series equals $1$ by \eqref{eq:euler2} with $z=q$. Differentiating term by term gives the density; this is justified because, uniformly in $t\ge0$, the differentiated series is dominated by the convergent series $\sum_kq^{k(k-1)/2}/(q;q)_k$.

\ref{it:slow} The $k=0$ term of the series is $e^{-t}/(q;q)_\infty$. For $k\ge1$ we have $e^{-t/q^k}\le e^{-t/q}$, and by \eqref{eq:euler2} with $z=-q$,
\[
\sum_{k\ge1}\frac{q^{k(k+1)/2}}{(q;q)_k}\le(-q;q)_\infty .
\]
(Probabilistically, $\zb=e_0+X$ with $X$ independent of $e_0$ and $\E e^{X}=\prod_{k\ge1}(1-q^k)^{-1}=\mathcal P(q)$: a slow takeoff needs only a slow \emph{first} idea.)

\ref{it:fast} \emph{Upper bound.} For $s\ge1$ let $m:=\lfloor\log s/\ell\rfloor$ and write $\log s=\ell(m+\theta)$ with $\theta\in[0,1)$. Since $\log(1+x)\ge\log x$ and $sq^k\ge1$ for $k\le m$,
\[
\log(-s;q)_\infty\ \ge\ \sum_{k=0}^m(\log s-k\ell)=\ell(m+1)\Big(\frac m2+\theta\Big)\ \ge\ \frac{\ell(m+\theta)^2}{2}=\frac{(\log s)^2}{2\ell},
\]
because $(m+1)(m+2\theta)-(m+\theta)^2=m+2\theta-\theta^2\ge0$. By Chernoff's bound and \ref{it:laplace}, $\log\P(\zb\le t)\le st-(\log s)^2/(2\ell)$. Choose $\log s=L+\log(L/\ell)$, which is admissible once $L\ge\ell$, so that $st=L/\ell$. Then
\[
\log\P(\zb\le t)\le\frac L\ell-\frac{(L+\log(L/\ell))^2}{2\ell}\le-\frac{L^2}{2\ell}-\frac{L\log L}{\ell}+\frac{(1+\log\ell)L}{\ell}.
\]
\emph{Lower bound.} Let $n:=\lceil(L+\log\frac{4}{1-q})/\ell\rceil$. Let $E_1:=\{e_k\le x_k\ \forall k<n\}$ with $x_k:=tq^{-k}/(2n)$, and $E_2:=\{\sum_{k\ge n}e_kq^k\le t/2\}$. These events are independent and $E_1\cap E_2\subseteq\{\zb\le t\}$. By Markov's inequality, $\P(E_2^c)\le 2q^n/((1-q)t)\le\frac12$. Since $1-e^{-x}\ge\frac12\min(x,1)$,
\[
\log\P(E_1)\ge-n\log2+\sum_{k<n}\min(0,k\ell-M),\qquad M:=L+\log(2n).
\]
The function $k\mapsto M-k\ell$ is decreasing, so the sum over $0\le k\le M/\ell$ of $M-k\ell$ is at most $M+M^2/(2\ell)$. Hence
\[
\log\P(\zb\le t)\ \ge\ -\log 2-n\log2-M-\frac{M^2}{2\ell}.
\]
Now $n=L/\ell+O(1)$ and $\log(2n)=\log L+O(1)$, so $M^2=L^2+2L\log L+O(L)$, and the lower bound matches the upper bound up to $O(L)$.

\ref{it:mellin} For $\Re s>-1$, integrate the density in \ref{it:series} against $t^s$ term by term, using
\[
\int_0^\infty t^se^{-t/q^k}\dd t=\Gamma(1+s)\,q^{k(1+s)} .
\]
The interchange is justified by absolute convergence, since the series $\sum_k q^{k(k-1)/2+k(1+\Re s)}/(q;q)_k$ converges. Then
\[
\E\zb^{\,s}=\frac{\Gamma(1+s)}{(q;q)_\infty}\sum_{k\ge0}\frac{(-1)^kq^{k(k-1)/2}(q^{1+s})^k}{(q;q)_k}=\frac{\Gamma(1+s)(q^{1+s};q)_\infty}{(q;q)_\infty}
\]
by \eqref{eq:euler2}. By \ref{it:fast} and \ref{it:slow}, $\E\zb^{\,x}<\infty$ for every real $x$. So $s\mapsto\E e^{s\log\zb}$ is the moment generating function of $\log\zb$, finite on all of $\R$, and therefore entire. On the right-hand side, the pole of $\Gamma(1+s)$ at $s=-n$ is cancelled by the zero of the factor $1-q^{s+n}$ in $(q^{1+s};q)_\infty$. The identity theorem extends the formula to $\C$. Near $s=-n$ we have $\Gamma(1+s)\sim(-1)^{n-1}/((n-1)!(s+n))$ and $1-q^{s+n}\sim\ell(s+n)$. The remaining factors give $\prod_{m=1}^{n-1}(1-q^{-m})\cdot(q;q)_\infty$, and combining these yields the stated negative moments.
\end{proof}

\begin{corollary}\label{cor:moments}
$\E\zb=\dfrac1{1-q}$, $\Var\zb=\dfrac1{1-q^2}$, and $\CV(\zb)=\sqrt{\dfrac{1-q}{1+q}}$. As $q\uparrow1$ (small ideas), $\CV\sim\sqrt{\ell/2}\to0$, consistent with \cref{prop:fluid}. As $q\downarrow0$ (one giant idea), $\CV\to1$.
\end{corollary}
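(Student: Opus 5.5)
The plan is to read everything off \cref{thm:law}\ref{it:moments}, or equivalently off the independent-sum representation $\zb=\sum_{n\ge0}e_nq^n$ in \eqref{eq:zeta}, and then expand in the two limits.

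\emph{Mean and variance.} I will compute directly from the representation. By linearity, $\E\zb=\sum_n q^n=1/(1-q)$. Since the $e_n$ are independent with $\Var e_n=1$, we get $\Var\zb=\sum_n q^{2n}=1/(1-q^2)$.

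As a cross-check, \cref{thm:law}\ref{it:moments} gives $\E\zb^{\,2}=2/((1-q)(1-q^2))$. Subtracting $1/(1-q)^2$ leaves
\[
\frac{2(1-q)-(1-q^2)}{(1-q)^2(1+q)}=\frac{1}{1-q^2},
\]
which agrees.

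\emph{Coefficient of variation.} Next,
\[
\CV^2=\frac{(1-q)^2}{1-q^2}=\frac{1-q}{1+q}.
\]

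\emph{The two limits.}
\begin{itemize}
\item As $q\downarrow0$, the ratio $(1-q)/(1+q)$ tends to $1$, so $\CV\to1$. This matches $\zb\to e_0$, a single exponential.
\item As $q\uparrow1$, write $q=e^{-\ell}$ with $\ell\downarrow0$. Then $1-q\sim\ell$ and $1+q\to2$, so $\CV\sim\sqrt{\ell/2}\to0$.
\end{itemize}

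\emph{Consistency with \cref{prop:fluid}.} Under either calibration convention, $q\uparrow1$ corresponds to $h\downarrow0$, because $\ell=\eps\log(1+h)$. A vanishing coefficient of variation is exactly the $L^2$ concentration of $\zeta^h$ around its mean that \cref{prop:fluid} asserts, since $\CV$ is scale-invariant and so unaffected by the factor $q_0$.

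I expect no real obstacle. The only care needed is to note that all the series converge absolutely for $q\in(0,1)$.
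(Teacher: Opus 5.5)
This is correct and is essentially the paper's argument: the paper gives no separate proof and reads the corollary off \cref{thm:law}\ref{it:moments} with $n=1,2$ (your cross-check), and your direct variance computation for the independent sum $\sum_n e_nq^n$ is an equally immediate route. One algebra slip in the cross-check: as displayed, $\frac{2(1-q)-(1-q^2)}{(1-q)^2(1+q)}=\frac{1}{1+q}$, not $\frac{1}{1-q^2}$. The denominator for that numerator should be $(1-q)^3(1+q)$; equivalently, over $(1-q)^2(1+q)$ the numerator is $2-(1+q)=1-q$, and either way you get $1/(1-q^2)$ as claimed.
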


\begin{remark}[The working paper's computation is a mean]\label{rem:casp-mean}
Chan et al.\ assume that each doubling of $A$ takes $2^{-\eps}\approx0.76$ times as long as the previous one, starting from $4.5$ months, and sum nine terms of the geometric series \cite[SM]{CASP26}. For a Good process with doubling-sized ideas ($h=1$) under convention~B, $q_0=1/d_0$ and the expected duration of the $k$-th doubling is exactly $d_0q^k$ with $q=2^{-\eps}$. So their seventeen months is, up to rounding, $\E T_9=\NumETnine$ months, and the full series $d_0/(1-2^{-\eps})=\NumEzetaC$ months is $\E\zeta$. \Cref{thm:law} supplies the rest of the distribution (\cref{tab:law}).
\end{remark}

Ideas need not all be the same size. For general sizes the explosion time is a \emph{perpetuity} \cite{Ver79}, and its moments remain explicit.

\begin{theorem}[Random idea sizes]\label{thm:random}
Let $\eps>0$, let $\mu$ be any i.i.d.\ strictly positive idea-size law, $Y:=(1+H)^{-\eps}$, $y_j:=\E Y^j\in(0,1)$, and $\zb:=\kappa A_0^\eps\zeta$. Then
\begin{enumerate}[label=(\alph*)]
\item $\zb\overset{d}{=}e+Y\zb'$ with $e\sim\Exp(1)$, $Y$ and $\zb'\overset d=\zb$ independent;
\item $\E\zb^{\,m}=m!\big/\prod_{j=1}^m(1-y_j)$ for $m\in\N$;
\item $\CV(\zb)^2=\dfrac{2(1-y_1)}{1-y_2}-1$.
\end{enumerate}
\end{theorem}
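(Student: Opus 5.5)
The plan is to start from the series representation already used in the proof of \cref{thm:dichotomy}\ref{it:dich-power}: with $Y_i=(1+H_i)^{-\eps}$,
\[
\zb=\sum_{n\ge0}e_n\prod_{i\le n}Y_i .
\]
For part (a) we split off the $n=0$ term and factor $Y_1$ out of the rest, giving $\zb=e_0+Y_1\zb'$ with $\zb':=\sum_{n\ge0}e_{n+1}\prod_{2\le i\le n+1}Y_i$. The variable $\zb'$ is built from the shifted sequences $(e_{n+1})$ and $(H_{i+1})$, so it has the same law as $\zb$. It is a function of variables disjoint from $e_0$ and $H_1$, hence independent of the pair $(e_0,Y_1)$, which is itself independent. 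This is an almost-sure identity, not just a distributional one, so no fixed-point uniqueness argument is needed.

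For part (b) we raise the identity in (a) to the power $m$ and expand binomially, using independence:
\[
\E\zb^{\,m}=\sum_{k=0}^m\binom mk\,\E e^{m-k}\,y_k\,\E\zb^{\,k}
=\sum_{k=0}^m\frac{m!}{k!}\,y_k\,\E\zb^{\,k}.
\]
Finiteness of every $\E\zb^{\,k}$ comes from \cref{thm:dichotomy}\ref{it:dich-power}, which is exactly why we can subtract. Moving the $k=m$ term to the left gives $(1-y_m)\E\zb^{\,m}=\sum_{k<m}\frac{m!}{k!}y_k\E\zb^{\,k}$. We then prove $\E\zb^{\,k}=k!/\prod_{j\le k}(1-y_j)$ by induction on $k$. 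Writing $P_k:=\prod_{j\le k}(1-y_j)$ and $P_0=1$, the claim reduces to the identity
\[
\sum_{k<m}\frac{y_k}{P_k}=\frac{1-y_m}{P_m}=\frac{1}{P_{m-1}} .
\]
This telescopes, because
\[
\frac1{P_k}-\frac1{P_{k-1}}=\frac{1-(1-y_k)}{P_k}=\frac{y_k}{P_k}\quad\text{for }k\ge1,
\]
and the $k=0$ term is $y_0/P_0=1$. An alternative route would be to condition on the $Y$'s and expand the $m$-th power of the series directly, but the induction above is shorter.

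Part (c) is then arithmetic. From (b), $\E\zb=1/(1-y_1)$ and $\E\zb^2=2/\bigl((1-y_1)(1-y_2)\bigr)$, so
\[
\CV^2=\frac{\E\zb^2}{(\E\zb)^2}-1=\frac{2(1-y_1)}{1-y_2}-1 .
\]
I do not expect a real obstacle. The only points needing care are that the moments are finite before subtracting, which is supplied by \cref{thm:dichotomy}\ref{it:dich-power}, and the bookkeeping in the telescoping sum. As a sanity check, with $Y\equiv q$ the result should recover \cref{thm:law}\ref{it:moments}.
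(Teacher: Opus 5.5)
Your proposal is correct and follows the paper's proof essentially step for step: the same almost-sure splitting $\zb=e_0+Y_1\zb'$ for (a), the same binomial expansion with finiteness of moments taken from \cref{thm:dichotomy}\ref{it:dich-power}, induction through the same telescoping identity for (b), and direct arithmetic for (c). The only difference is cosmetic: you verify the telescoping sum via $1/P_k-1/P_{k-1}=y_k/P_k$, whereas the paper adds the $j=m$ term to both sides.
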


\begin{proof}
(a) $\zb=e_0+Y_1\sum_{n\ge1}e_n\prod_{2\le i\le n}Y_i$, and the last sum is a copy of $\zb$ independent of $(e_0,Y_1)$. (b) All moments are finite by \cref{thm:dichotomy}\ref{it:dich-power}. Expanding $(e+Y\zb')^m$ and isolating the term $j=m$ gives
\[
(1-y_m)\E\zb^{\,m}=\sum_{j=0}^{m-1}\frac{m!}{j!}\,y_j\,\E\zb^{\,j}\qquad(y_0:=1).
\]
The claim follows by induction from the telescoping identity $\sum_{j=0}^{m-1}y_j\prod_{i=1}^j(1-y_i)^{-1}=\prod_{i=1}^{m-1}(1-y_i)^{-1}$; to verify it, add the term $j=m$ to both sides and use $(1-y_m)+y_m=1$. (c) follows from (b) with $m=1,2$.
\end{proof}

\begin{corollary}[Equal ideas are the most predictable]\label{cor:lattice-min}
Among all idea-size laws with the same standardised mean explosion time $\E\zb=1/(1-y_1)$ (equivalently, the same mean at a common current rate $\kappa A_0^{\eps}$), the coefficient of variation of $\zb$ is minimised exactly when $Y$ is a.s.\ constant, i.e.\ in the lattice case.
\end{corollary}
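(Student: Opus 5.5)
The argument reduces to Jensen's inequality applied to $Y$, using part (c) of \cref{thm:random}. By that result,
\[
\CV(\zb)^2=\frac{2(1-y_1)}{1-y_2}-1,
\]
where $y_1=\E Y$ and $y_2=\E Y^2$. Fixing $\E\zb=1/(1-y_1)$ is the same as fixing $y_1$. With $y_1$ held fixed, the right-hand side is strictly increasing in $1-y_2$, hence strictly decreasing in $y_2$. So minimising the coefficient of variation means minimising $y_2=\E Y^2$ over idea-size laws with prescribed $\E Y=y_1$.

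Next I apply Jensen, or equivalently $\Var Y\ge0$, which gives $y_2\ge y_1^2$. Equality holds if and only if $Y$ is a.s.\ constant. Since $Y=(1+H)^{-\eps}$ with $\eps>0$ and $y\mapsto(1+y)^{-\eps}$ strictly monotone, $Y$ is constant exactly when $H\equiv h$, the lattice case. For admissibility I need a lattice law with the same $y_1$. Every value $y_1\in(0,1)$ arises from exactly one $h=y_1^{-1/\eps}-1>0$, so the constant law is always an admissible competitor. Its value is
\[
\CV^2=\frac{2}{1+y_1}-1=\frac{1-y_1}{1+y_1},
\]
which agrees with \cref{cor:moments} at $q=y_1$.

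To finish, I check that the inequality is strict for every non-degenerate law. If $Y$ is non-degenerate, then $y_2>y_1^2$, and $1-y_2<1-y_1^2$ with both sides positive, since $y_2<1$ because $Y\in(0,1)$. This gives a strictly larger $\CV^2$. Together these steps establish both the minimisation and its uniqueness.

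The main point needing care is the monotonicity step: I must confirm that the denominator $1-y_2$ stays positive, which it does since $Y<1$ a.s. Beyond that the proof is routine.
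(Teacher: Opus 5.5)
Your proposal is correct and is essentially the paper's proof: by \cref{thm:random}(c), at fixed $y_1$ the value of $\CV(\zb)^2$ is monotone in $y_2$, and $y_2\ge y_1^2$ with equality if and only if $Y$ is a.s.\ constant. Your check that the lattice competitor $h=y_1^{-1/\eps}-1$ always exists is a small addition that the paper leaves implicit.

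One slip needs fixing. At fixed $y_1$, $2(1-y_1)/(1-y_2)-1$ is strictly \emph{decreasing} in $1-y_2$, hence strictly \emph{increasing} in $y_2$, which is the reverse of what you wrote. Your conclusion that one must minimise $y_2$ and your final strictness step both already use the correct direction.
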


\begin{proof}
At fixed $y_1$, $\CV^2$ is increasing in $y_2$, and $y_2\ge y_1^2$, with equality if and only if $Y$ is a.s.\ constant.
\end{proof}

The perpetuity is explicit for at least one natural family of sizes.

\begin{corollary}[Lomax ideas give a Gamma explosion time]\label{cor:gamma}
Let $\eps>0$, $b>0$, and suppose $\P(H>x)=(1+x)^{-b\eps}$ for $x\ge0$; equivalently, $\log(1+H)$ is exponential with rate $b\eps$, or $Y=(1+H)^{-\eps}\sim\BetaDist(b,1)$. Then
\[
\zb\sim\GammaDist(b+1,1),\qquad \E\zb=b+1,\qquad \CV(\zb)=\frac1{\sqrt{b+1}} .
\]
At the same standardised mean the lattice case has $\CV=1/\sqrt{2b+1}$, in line with \cref{cor:lattice-min}. The improvement size has finite mean if and only if $b\eps>1$; the corollary needs no such condition.
\end{corollary}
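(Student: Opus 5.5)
The plan is to identify the law of $\zb$ through its moments and then show that these moments determine it. The distributional identity of \cref{thm:random}(a) can serve as a cross-check.

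\emph{Step 1 (the law of $Y$).} From $\P(H>x)=(1+x)^{-b\eps}$ I compute
\[
\P\bigl(\log(1+H)>u\bigr)=e^{-b\eps u},
\]
so $\log(1+H)\sim\Exp(b\eps)$. Then $-\log Y=\eps\log(1+H)\sim\Exp(b)$, and therefore $\P(Y\le y)=y^{b}$ on $(0,1)$, which is the $\BetaDist(b,1)$ law. Its moments are
\[
y_j=\frac{b}{b+j},\qquad 1-y_j=\frac{j}{b+j}.
\]

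\emph{Step 2 (moments).} By \cref{thm:random}(b),
\[
\E\zb^{\,m}=\frac{m!}{\prod_{j=1}^m\frac{j}{b+j}}=\prod_{j=1}^m(b+j)=\frac{\Gamma(b+1+m)}{\Gamma(b+1)}.
\]
These are exactly the moments of $\GammaDist(b+1,1)$. No moment condition on $H$ enters here, because \cref{thm:dichotomy}\ref{it:dich-power} already guarantees that all moments of $\zb$ are finite.

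\emph{Step 3 (determinacy).} The Gamma moments satisfy $\E\zb^{\,m}\le C\,m!\,m^{b}$. Consequently the series $\sum_m \E\zb^{\,m}s^m/m!$ converges for $|s|<1$. So $\zb$ has a finite moment generating function near $0$, the moment problem is determinate, and $\zb\sim\GammaDist(b+1,1)$.

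As an alternative route, I could verify the fixed-point identity $e+Y G\overset d=G$ directly, with $G\sim\GammaDist(b+1,1)$. The product $YG$ of a $\BetaDist(b,1)$ variable and an independent $\GammaDist(b+1,1)$ variable is $\GammaDist(b,1)$ by the beta–gamma algebra, and adding an independent $\Exp(1)$ gives $\GammaDist(b+1,1)$. The perpetuity has a unique solution, since it is the a.s.\ limit of the iterates, so this identification also yields the law.

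\emph{Step 4 (remaining claims).} The mean is $b+1$ and $\CV=1/\sqrt{b+1}$. For the comparison with the lattice case, the same mean requires $1/(1-q)=b+1$, that is, $q=b/(b+1)$. \Cref{cor:moments} then gives
\[
\CV^2=\frac{1-q}{1+q}=\frac{1}{2b+1}.
\]
Finally, $\E H=\int_0^\infty(1+x)^{-b\eps}\dd x$ is finite if and only if $b\eps>1$.

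The main obstacle is only the determinacy step, and the moment-generating-function bound handles it.
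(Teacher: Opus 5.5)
Your proposal is correct and follows the paper's proof essentially step for step. You identify $Y\sim\BetaDist(b,1)$, insert $y_j=b/(b+j)$ into \cref{thm:random}(b) to obtain the $\GammaDist(b+1,1)$ moments, conclude by moment determinacy from a finite moment generating function near $0$, and read off the lattice comparison from $q=b/(b+1)$. The only difference is your cross-check: you use beta--gamma algebra together with uniqueness of the perpetuity, whereas the paper directly verifies the Laplace-transform identity $F(s)=(1+s)^{-1}\E F(sY)$; both are valid.
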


\begin{proof}
$\P(Y\le y)=\P\big(1+H\ge y^{-1/\eps}\big)=y^{b}$, so $Y\sim\BetaDist(b,1)$ and $y_j=b/(b+j)$. By \cref{thm:random}(b),
\[
\E\zb^{\,m}=m!\prod_{j=1}^m\frac{b+j}{j}=\frac{\Gamma(b+m+1)}{\Gamma(b+1)} ,
\]
which are the moments of $\GammaDist(b+1,1)$. The Gamma law is determined by its moments (its moment generating function is finite near $0$), so $\zb\sim\GammaDist(b+1,1)$. As a direct check, $F(s)=(1+s)^{-(b+1)}$ satisfies the transform form of \cref{thm:random}(a), $F(s)=(1+s)^{-1}\E F(sY)$, because $\frac{\dd}{\dd y}\big[y^b(1+sy)^{-b}\big]=b\,y^{b-1}(1+sy)^{-b-1}$ gives $\int_0^1 b\,y^{b-1}(1+sy)^{-(b+1)}\dd y=(1+s)^{-b}$. For the lattice comparison, $\E\zb=b+1$ forces $q=b/(b+1)$, and $(1-q)/(1+q)=1/(2b+1)$.
\end{proof}

\begin{remark}[A Ramanujan integral]\label{rem:ramanujan}
For $0<u<1$, Tonelli's theorem applied to $\zb^{-u}=\Gamma(u)^{-1}\int_0^\infty s^{u-1}e^{-s\zb}\dd s$, together with \cref{thm:law}\ref{it:laplace} and \ref{it:mellin} at $s=-u$, gives
\[
\int_0^\infty\frac{t^{u-1}\dd t}{(-t;q)_\infty}=\frac{\pi}{\sin\pi u}\,\frac{(q^{1-u};q)_\infty}{(q;q)_\infty},
\]
an evaluation in the circle of Ramanujan's $q$-beta integrals \cite{Ask80,GR04}.
\end{remark}

\section{Residual-time forecasting and parameter uncertainty}\label{sec:midnight}

\subsection{Self-similarity of the residual time}

\begin{theorem}[Residual-time self-similarity]\label{thm:selfsim}
In the lattice case with $\eps>0$, let $\tau$ be any $(\F_t)$-stopping time. On $\{\tau<\zeta\}$, conditionally on $\F_\tau$, the random variable $q_{N_\tau}(\zeta-\tau)$ has law $\Lambda_q$. That is, for every bounded measurable $\varphi$,
\[
\E\big[\varphi\big(q_{N_\tau}(\zeta-\tau)\big)\mathbf 1_{\{\tau<\zeta\}}\,\big|\,\F_\tau\big]=\mathbf 1_{\{\tau<\zeta\}}\,\E\varphi(\zb),
\]
where expressions multiplied by $\mathbf 1_{\{\tau<\zeta\}}$ are read as zero outside that event.
\end{theorem}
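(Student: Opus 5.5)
We will derive the statement from the strong Markov property of the pure birth chain combined with the memoryless property of the holding times. First we treat idea times. Fix $n$ and condition on $\F_{T_n}$. On $\{T_n<\zeta\}$, which has full probability since the holding times are finite, we have $\zeta-T_n=\sum_{k\ge n}e_k/q_k$. The variables $(e_k)_{k\ge n}$ are independent of $\F_{T_n}=\sigma(e_0,\dots,e_{n-1})$. Hence $q_n(\zeta-T_n)=\sum_{j\ge0}e_{n+j}q^j$ has law $\Lambda_q$ and is independent of $\F_{T_n}$.

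For a general stopping time $\tau$, we split on the event $\{N_\tau=n\}=\{T_n\le\tau<T_{n+1}\}$. On this event we write
\[
q_n(\zeta-\tau)=q_n(T_{n+1}-\tau)+\sum_{j\ge1}e_{n+j}q^j .
\]
The second summand is independent of $\F_{T_{n+1}}$ and has the law of $q\,\zb$. It therefore suffices to prove two things about the first summand. First, conditionally on $\F_\tau$ and on $\{N_\tau=n,\ \tau<\zeta\}$, the variable $q_n(T_{n+1}-\tau)$ is $\Exp(1)$. Second, it is independent of $(e_{n+j})_{j\ge1}$. Granting these, $q_n(\zeta-\tau)$ has the law of $e_0+\sum_{j\ge1}e_jq^j=\zb$. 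Summing over $n$ then gives the displayed identity, with the indicator handling $\tau\ge\zeta$.

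The main obstacle is the memoryless step at an arbitrary stopping time, not just a deterministic one. We will use the standard structure of stopping times for a pure-jump filtration: on $\{T_n\le\tau<T_{n+1}\}$, $\tau$ coincides with an $\F_{T_n}$-measurable random variable $R_n\ge T_n$. Formally, there are $\F_{T_n}$-measurable $R_n$ with $\tau\wedge T_{n+1}=R_n\wedge T_{n+1}$ on $\{T_n\le\tau\}$. This is the usual description of the augmented natural filtration of a marked point process, as in Jacod or Brémaud. Given this, on $\{T_n\le\tau<T_{n+1}\}=\{T_n\le R_n<T_{n+1}\}$ we have $T_{n+1}-\tau=T_n+e_n/q_n-R_n$. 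The variable $e_n$ is independent of $\F_{T_n}\ni R_n$, so the memoryless property gives, conditionally on $\F_{T_n}$ and $\{e_n/q_n>R_n-T_n\}$, that $e_n/q_n-(R_n-T_n)\sim\Exp(q_n)$. This residual is independent of $\F_{T_n}$ and of $(e_{n+j})_{j\ge1}$.

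Finally, we pass from conditioning on $\F_{T_n}\vee\sigma(\{T_{n+1}>R_n\})$ to conditioning on $\F_\tau$. We verify the identity against $\mathbf 1_G$ for $G\in\F_\tau$. The needed fact is that $G\cap\{N_\tau=n\}$ equals $G_n\cap\{T_n\le R_n<T_{n+1}\}$ for some $G_n\in\F_{T_n}$, which is again a consequence of the same structure theorem. Multiplying the conditional laws and summing over $n$ completes the proof. A more compact alternative is to cite the strong Markov property of the continuous-time Markov chain $(N_t)$ at $\tau$, together with the fact that the chain restarted at state $n$ has explosion time $\sum_{j\ge0}e'_j/q_{n+j}$, whose scaling by $q_n$ has law $\Lambda_q$. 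This alternative hides the same memoryless step inside the strong Markov theorem for minimal chains up to explosion.
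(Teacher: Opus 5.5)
Your proof is correct, and your closing ``compact alternative'' is exactly the paper's proof. The paper cites the strong Markov property of minimal chains (Norris) at $\tau$, notes that the chain restarted in state $n$ has rates $\rho^n q_j$, and rescales the holding times by $\rho^n$.

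Your main argument takes a different, more elementary route. At idea times you use only the independence of $(e_k)_{k\ge n}$ from $\F_{T_n}$. At a general $\tau$ you split $\zeta-\tau=(T_{n+1}-\tau)+(\zeta-T_{n+1})$ on $\{N_\tau=n\}$. You then use the Jacod--Br\'emaud description of stopping times (an $\F_{T_n}$-measurable $R_n$ with $\tau=R_n$ on $\{T_n\le\tau<T_{n+1}\}$, and the matching trace identity for $\F_\tau$). Finally you apply the memoryless property of $e_n$ to the overshoot. The ``multiply and sum over $n$'' step becomes rigorous as follows. Put $u:=q_n(R_n-T_n)\ge0$ and $Y:=\sum_{j\ge1}e_{n+j}q^j$. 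Then $\{T_n\le R_n<T_{n+1}\}=\{T_n\le R_n\}\cap\{e_n>u\}$ and $\E[\varphi(e_n-u+Y)\mathbf 1_{\{e_n>u\}}\mid\F_{T_n}]=e^{-u}\,\E\varphi(\zb)=\P(e_n>u\mid\F_{T_n})\,\E\varphi(\zb)$. Integrating this over $G_n\in\F_{T_n}$ gives the claim.

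Two small fixes are needed. First, normalise $R_n<T_n$ on $\{\tau<T_n\}$, which is allowed because that event lies in $\F_{T_n}$; your stated equality of events needs this. Second, $T_n<\zeta$ a.s.\ holds because the remaining $e_k$ are a.s.\ positive, not because the holding times are finite.

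Your route isolates exactly where exponential holding times are needed: only in the overshoot at a non-idea time. That is the point of \cref{rem:scope}, and your idea-time half carries over unchanged to random sizes, as in the paper's remark on random sizes. The cost is bookkeeping that the strong Markov theorem hides. You need the structure theorem for stopping times of a possibly exploding point-process filtration. You also need the passage from the usual augmentation to the raw natural filtration: stopping times of the augmented filtration agree a.s.\ with those of the right-continuous natural one. The paper's proof is shorter, but the same memoryless step sits inside the cited theorem.
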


\begin{proof}
$(N_t)$ is the minimal pure birth chain with rates $(q_n)$. By the strong Markov property of minimal continuous-time Markov chains \cite{Nor97}, on $\{\tau<\zeta,\ N_\tau=n\}$ and conditionally on $\F_\tau$, the process $(N_{\tau+s}-n)_{s\ge0}$ is a minimal pure birth chain with rates $(q_{n+j})_{j\ge0}=(\rho^nq_j)_{j\ge0}$, and its explosion time is $\zeta-\tau$. Multiplying all rates by $\rho^n$ divides all holding times by $\rho^n$. So $\zeta-\tau\overset d=\rho^{-n}\zeta=\zb/q_n$ conditionally, as claimed.
\end{proof}

\paragraph{What the theorem does and does not say.} With model parameters fixed and known, the remaining time has a scale-invariant conditional distribution, so its \emph{relative} predictive uncertainty does not vanish as the process approaches explosion. Absolute uncertainty, measured in calendar time, does shrink, in proportion to the current expected waiting time. The theorem does not rule out learning unknown parameters, which is a different source of uncertainty from the irreducible randomness of future holding times, and it is not a statement about intelligence-explosion models outside this class. Let $m_t:=1/q_{N_t}$ be the current expected waiting time for the next idea.

\begin{corollary}\label{cor:forecast}
On $\{t<\zeta\}$, with parameters known:
\begin{enumerate}[label=(\alph*)]
\item $\E[\zeta-t\mid\F_t]=m_t/(1-q)$ and $\CV(\zeta-t\mid\F_t)=\sqrt{(1-q)/(1+q)}$, whatever $t$ and whatever has happened so far;
\item every $\F_t$-measurable forecast $\hat Z$ satisfies the conditional mean-squared-error bound
\[
\E[(\zeta-\hat Z)^2\mid\F_t]\ \ge\ \frac{1-q}{1+q}\,\big(\E[\zeta-t\mid\F_t]\big)^2 ,
\]
with equality at $\hat Z=\E[\zeta\mid\F_t]$;
\item if $[a,b]$ is an $\F_t$-measurable interval with $\P(\zeta\in[a,b]\mid\F_t)\ge1-\delta$, then
\[
b-a\ \ge\ (1-\delta)(1-q)\,\E[\zeta-t\mid\F_t].
\]
\end{enumerate}
\end{corollary}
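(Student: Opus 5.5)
The plan is to reduce everything to \cref{thm:selfsim} applied at the deterministic stopping time $\tau=t$, and then use \cref{cor:moments}. On $\{t<\zeta\}$, conditionally on $\F_t$, the residual $\zeta-t$ has the law of $m_t\zb$, where $m_t=1/q_{N_t}$ is $\F_t$-measurable and $\zb\sim\Lambda_q$ is independent of $\F_t$. Using a regular conditional distribution, I will treat $m_t$ as a constant and $\zeta-t=m_t\zb$ in law for each of the three items.

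For (a), the conditional mean is $m_t\E\zb=m_t/(1-q)$ and the conditional variance is $m_t^2\Var\zb=m_t^2/(1-q^2)$. Their ratio, after taking the square root, gives $\CV=\sqrt{(1-q)/(1+q)}$, which does not depend on $m_t$ and hence not on the history. For (b), I will use the standard conditional bias--variance decomposition: for $\F_t$-measurable $\hat Z$,
\[
\E[(\zeta-\hat Z)^2\mid\F_t]=\Var(\zeta\mid\F_t)+(\E[\zeta\mid\F_t]-\hat Z)^2\ge\Var(\zeta-t\mid\F_t).
\]
Equality holds at the conditional mean. Then $\Var(\zeta-t\mid\F_t)=\CV^2\cdot(\E[\zeta-t\mid\F_t])^2=\frac{1-q}{1+q}(\E[\zeta-t\mid\F_t])^2$ by (a). The conditional integrability needed here follows from finiteness of the moments of $\zb$ (\cref{thm:law}\ref{it:moments}).

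For (c), which I expect to be the only step needing a real idea, I need an anti-concentration bound for $\Lambda_q$: its density is bounded by $1$. I will get this from the representation $\zb=e_0+X$ used in the proof of \cref{thm:law}\ref{it:slow}, where $X$ is independent of $e_0\sim\Exp(1)$. The density $f_{\zb}(s)=\E[e^{-(s-X)}\mathbf 1_{X\le s}]\le1$. Conditionally on $\F_t$, $\zeta$ has density at most $1/m_t$, so
\[
1-\delta\le\P(\zeta\in[a,b]\mid\F_t)\le (b-a)/m_t.
\]
Since $m_t=(1-q)\E[\zeta-t\mid\F_t]$ by (a), this gives $b-a\ge(1-\delta)(1-q)\E[\zeta-t\mid\F_t]$. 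The only care needed is that the interval endpoints are $\F_t$-measurable, so they can be frozen under the conditional law.
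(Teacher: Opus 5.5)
Your proposal is correct and follows the paper's proof essentially step for step. Part (a) comes from \cref{thm:selfsim} at $\tau=t$ together with \cref{cor:moments}, and (b) from the conditional bias--variance decomposition. Part (c) uses the bound $f_{\zb}\le1$, obtained by writing $\zb$ as an $\Exp(1)$ variable plus an independent one, and then undoes the scaling with $m_t=(1-q)\E[\zeta-t\mid\F_t]$; your formula $f_{\zb}(s)=\E[e^{-(s-X)}\mathbf 1_{X\le s}]$ just makes explicit the paper's remark that $\Lambda_q$ is the convolution of the $\Exp(1)$ density with another law.
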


\begin{proof}
(a) follows from \cref{thm:selfsim,cor:moments}. (b) holds because $\E[(\zeta-\hat Z)^2\mid\F_t]=\Var(\zeta\mid\F_t)+(\E[\zeta\mid\F_t]-\hat Z)^2$. For (c), conditionally on $\F_t$ the variable $q_{N_t}(\zeta-t)$ has law $\Lambda_q$, whose density is bounded by $1$ (it is the convolution of the $\Exp(1)$ density with another law). An interval carrying mass $1-\delta$ therefore has length at least $1-\delta$ in these units, and undoing the scaling with (a) gives (c). The bound in (c) is valid but generally not sharp; it does not identify the shortest predictive interval.
\end{proof}

\begin{remark}[Lead times in current units]\label{rem:leadtime}
Let $z_p$ be the $p$-quantile of $\Lambda_q$. By \cref{thm:selfsim}, $\P(\zeta-t\ge m_t z_p\mid\F_t)=1-p$ on $\{t<\zeta\}$. A response that needs lead time $\Delta$ and must be in place with probability at least $1-p$ therefore has to be launched while $m_t\ge\Delta/z_p$. For doubling-sized ideas at $\eps=\NumEps$, $m_t$ is the current expected doubling time, $\E[\zeta-t\mid\F_t]=\NumMeanUnits\,m_t$, and the $5\%$ and $95\%$ quantiles are $z_{0.05}=\NumZlo$ and $z_{0.95}=\NumZhi$. A twelve-month response that must be in place with probability $0.95$ has to start while $m_t\ge12/z_{0.05}=\NumLeadTwelve$ months. All of this is conditional on the model and its parameters being known.
\end{remark}

\begin{figure}[t]
\centering
\includegraphics[width=\textwidth]{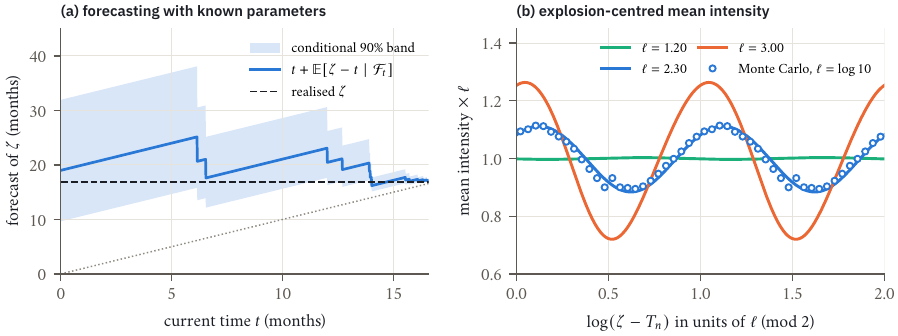}
\caption{(a) Forecasting along one simulated path ($h=1$, $\eps=\NumEps$, convention~B) with \emph{known} parameters. The forecast date $t+\E[\zeta-t\mid\F_t]$ jumps down at each idea and drifts up in between. The shaded region is the conditional $90\%$ predictive band given $\F_t$ under the model; it is not a confidence band for an unknown deterministic date. (b) Mean intensity of ideas in coordinates centred on the realised explosion time, $\log(\zeta-T_n)$, folded modulo $\ell$ (\cref{prop:logperiodic}). The curves are the exact Fourier series with $\Gamma$-coefficients; the circles are Monte Carlo for $\ell=\log10$. For $\ell=1.2$ the oscillation is about $\pm0.3\%$.}
\label{fig:forecast}
\end{figure}

\begin{remark}[Random sizes]
For non-lattice sizes, \cref{thm:selfsim} holds at the idea times $\tau=T_n$ directly from \cref{def:good}: $\kappa(A^{(n)})^{\eps}(\zeta-T_n)$ is independent of $\F_{T_n}$ with the law of $\zb$ in \cref{thm:random}. We do not use the extension to arbitrary stopping times.
\end{remark}

\subsection{Log-periodic structure in explosion-centred time}

Finite-time singularities with discrete scale invariance are known to produce log-periodic oscillations as the critical time is approached \cite{Sor98,Sor03}. In the lattice Good process the discrete scale invariance is exact: one more idea multiplies all future rates by $\rho$. The amplitude of the resulting oscillation is given exactly by the Gamma function.

\begin{proposition}[Explosion-centred mean intensity]\label{prop:logperiodic}
In the lattice case with $\eps>0$, the ideas arrive at $T_1,T_2,\dots$. Let $\nu:=\sum_{n\ge1}\delta_{\log(q_0(\zeta-T_n))}$ be the point process of their log-residual times, and let $\phi$ be the density of $\log\zb$. Then $\E\nu(\dd x)=m(x)\dd x$ with
\[
m(x)=\sum_{n\ge1}\phi(x+n\ell),\qquad |m(x)-p(x)|\le\frac{e^{x}}{1-q},
\]
where $p$ is the $\ell$-periodic function
\[
p(x)=\frac1\ell\Big[1+2\sum_{k\ge1}\Re\Big(\Gamma\big(1-\tfrac{2\pi ik}{\ell}\big)\,e^{2\pi ikx/\ell}\Big)\Big],\qquad
\Big|\Gamma\big(1-\tfrac{2\pi ik}{\ell}\big)\Big|^2=\frac{2\pi^2k/\ell}{\sinh(2\pi^2k/\ell)} .
\]
In particular, as $x\to-\infty$ the relative amplitude of the fundamental harmonic of $\ell\,m(x)$ tends to $2|\Gamma(1+2\pi i/\ell)|\sim(4\pi/\sqrt\ell)\,e^{-\pi^2/\ell}$ as $\ell\to0$.
\end{proposition}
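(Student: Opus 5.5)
The plan is to use the strong Markov property at the idea times to write the explosion-centred intensity as a sum of shifted copies of one density, and then apply Poisson summation to that sum.

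\textbf{Step 1: the form of $m$.} By the definition of the lattice process, $\zeta-T_n=\sum_{k\ge n}e_k/(q_0\rho^k)$. Hence
\[
q_0(\zeta-T_n)=q^n\sum_{j\ge0}e_{n+j}q^j\overset d=q^n\zb .
\]
This is also \cref{thm:selfsim} applied at $\tau=T_n$, where $q_{N_\tau}=q_0\rho^n$. Therefore $\log(q_0(\zeta-T_n))\overset d=\log\zb-n\ell$, which has density $\phi(x+n\ell)$. Summing over $n\ge1$ and using Tonelli's theorem gives $\E\nu(\dd x)=m(x)\dd x$ with $m(x)=\sum_{n\ge1}\phi(x+n\ell)$. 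The density $\phi(y)=e^yf_{\zb}(e^y)$ exists by \cref{thm:law}\ref{it:series}.

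\textbf{Step 2: the periodic part via Poisson summation.} Define the full periodisation $p(x):=\sum_{n\in\mathbb Z}\phi(x+n\ell)$. Its Fourier coefficients are
\[
\frac1\ell\int_\R\phi(y)e^{-2\pi iky/\ell}\dd y=\frac1\ell\,\E\zb^{-2\pi ik/\ell}.
\]
By \cref{thm:law}\ref{it:mellin} at $s=-2\pi ik/\ell$ this equals
\[
\frac1\ell\,\Gamma\Big(1-\tfrac{2\pi ik}{\ell}\Big)\frac{(q^{1-2\pi ik/\ell};q)_\infty}{(q;q)_\infty}.
\]
The key observation is that $q^{-2\pi ik/\ell}=e^{2\pi ik}=1$, so the $q$-Pochhammer ratio is exactly $1$, leaving $\frac1\ell\Gamma(1-2\pi ik/\ell)$.

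To justify the Fourier expansion, I will show that $\phi$ is smooth with rapid decay at both ends. At $+\infty$ this follows from the slow tail \ref{it:slow}, which gives $\phi(y)\approx e^{y-e^y}/(q;q)_\infty$. At $-\infty$ it follows from the small-time bound \ref{it:fast}. These give absolute convergence of the periodisation, and the Fourier coefficients decay fast because $|\Gamma(1-i\theta)|$ decays like $e^{-\pi|\theta|/2}$. The coefficients at $\pm k$ are conjugate, which combines them into the stated real series. The modulus formula is the reflection identity $|\Gamma(1+iy)|^2=\pi y/\sinh(\pi y)$ with $y=2\pi k/\ell$, which gives $(2\pi^2k/\ell)/\sinh(2\pi^2k/\ell)$.

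\textbf{Step 3: the error bound.} The difference is
\[
p(x)-m(x)=\sum_{n\le0}\phi(x-|n|\ell)=\sum_{j\ge0}\phi(x-j\ell).
\]
The bound I will use is $\phi(y)=e^yf_{\zb}(e^y)\le e^y$, since the density of $\zb$ is at most $1$ (as noted in the proof of \cref{cor:forecast}). Summing the geometric series,
\[
\sum_{j\ge0}e^{x-j\ell}=\frac{e^x}{1-q}.
\]
This is the main point needing care, namely the convention of which shifts are included: $m$ omits $n\le0$, and exactly those terms produce the bound.

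\textbf{Step 4: the asymptotic.} As $x\to-\infty$, $\ell m(x)$ is asymptotically $\ell p(x)$. The fundamental harmonic of $\ell p$ relative to its mean $1$ has amplitude $2|\Gamma(1+2\pi i/\ell)|$, where conjugation does not change the modulus. As $\ell\to0$, with $y=2\pi/\ell$,
\[
|\Gamma(1+iy)|=\sqrt{\pi y/\sinh\pi y}\sim\sqrt{2\pi y}\,e^{-\pi y/2}.
\]
This gives $2\sqrt{4\pi^2/\ell}\,e^{-\pi^2/\ell}=(4\pi/\sqrt\ell)\,e^{-\pi^2/\ell}$.

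The only genuinely delicate step is the regularity needed for pointwise Poisson summation in Step 2. I will handle it through absolute summability of the Fourier coefficients together with continuity of $p$.
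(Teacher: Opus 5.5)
Your proposal is correct and follows the paper's proof essentially step for step: it uses $q_0(\zeta-T_n)=q^n\zb^{(n)}$ to get $m$, computes the Fourier coefficients of the full periodisation $p$ via the Mellin formula with $q^{-2\pi ik/\ell}=1$, sums $\phi(y)\le e^y$ over the omitted shifts $n\le 0$ to obtain the error bound, and gets convergence from absolute summability of the $\Gamma$-coefficients together with continuity of $p$. One small adjustment: local uniform convergence of the periodisation at the $-\infty$ end follows directly from $\phi(y)\le e^y$, not from \cref{thm:law}\ref{it:fast}, and at the $+\infty$ end from bounding the density series of \cref{thm:law}\ref{it:series}, not the survival-function bound \ref{it:slow}. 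Both of those items concern distribution functions rather than the density.
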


\begin{proof}
From \eqref{eq:zeta}, $\zeta-T_n=q^n\zeta^{(n)}$ with $\zeta^{(n)}:=\sum_{j\ge0}e_{n+j}q^j/q_0\overset d=\zeta$. Hence $\log(q_0(\zeta-T_n))=-n\ell+\log\zb^{(n)}$, where each $\zb^{(n)}:=q_0\zeta^{(n)}$ has law $\Lambda_q$, and linearity of expectation gives $m$. The density of $\zb$ is bounded by $1$, so $\phi(y)=e^yf_{\zb}(e^y)\le e^y$, and
\[
|m(x)-p(x)|=\sum_{j\ge0}\phi(x-j\ell)\le e^x\sum_{j\ge0}q^j ,
\]
where $p(x):=\sum_{n\in\mathbb Z}\phi(x+n\ell)$. The periodisation $p$ of the integrable function $\phi$ has Fourier coefficients
\[
\frac1\ell\int_\R\phi(y)e^{-2\pi iky/\ell}\dd y=\frac1\ell\,\E\big[\zb^{-2\pi ik/\ell}\big]=\frac1\ell\,\Gamma\big(1-\tfrac{2\pi ik}\ell\big)\frac{(q^{1-2\pi ik/\ell};q)_\infty}{(q;q)_\infty}=\frac1\ell\,\Gamma\big(1-\tfrac{2\pi ik}\ell\big),
\]
by \cref{thm:law}\ref{it:mellin} and the identity $q^{-2\pi ik/\ell}=e^{2\pi ik}=1$. The classical formula $|\Gamma(1+iy)|^2=\pi y/\sinh(\pi y)$ shows that the coefficients are absolutely summable. So the Fourier series converges uniformly to $p$, which is continuous because the periodisation converges locally uniformly. Stirling's formula gives the asymptotics.
\end{proof}

This is an ensemble statement in coordinates centred on the realised explosion time. Since that time is not known to an online observer, the proposition is not itself a prospective detection procedure, and it says nothing about how visible the oscillation is along a single path; substituting an estimated explosion time for $\zeta$ would require a separate treatment of the estimation error. What it does identify is an exact discrete-scale-invariant structure, together with the size of its harmonics. That size is nonzero but exponentially small in $1/\ell$: the fundamental amplitude $2|\Gamma(1+2\pi i/\ell)|$ is about $\NumAmpHone$ for doubling-sized ideas at $\eps=\NumEps$, about $\NumAmpTwelve$ at $\ell=1.2$, and $\NumAmpLogTen\%$ only when each idea multiplies the rate of progress tenfold ($\ell=\log10$); see \cref{fig:forecast}(b).

\subsection{Uncertain growth exponent: an explicit $1/t$ tail}

The working paper stresses that the returns parameter is uncertain; one estimate has a $90\%$ credible interval for $r$ of $(0.727,2.094)$ \cite[note~6]{CASP26}. Uncertainty near the critical value has a sharp consequence.

\begin{proposition}[Exact Pareto tail under parameter uncertainty]\label{prop:uncertain}
In the lattice case with fixed $h$ and fixed current idea rate $q_0$, let $\eps$ be random, independent of $(e_n)$, and suppose its law has a density $\pi$ on a neighbourhood of $0$ that is continuous at $0$ with $\pi(0)>0$. Then $\E[\zeta\mid\eps>0]=\infty$ and
\[
\lim_{t\to\infty}t\,\P(\zeta>t\mid\eps>0)=\frac{\pi(0)}{q_0\log(1+h)\,\P(\eps>0)} .
\]
Since $q_0\log(1+h)=g$ is the local log drift (\cref{box:calibration}, B), the constant equals $\pi(0)/(g\,\P(\eps>0))$, which is also the limit of $t\,\P(T^\ast>t\mid\eps>0)$ for the fluid waiting time $T^\ast=1/(\eps g)$.
\end{proposition}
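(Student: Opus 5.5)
Conditioning on $\eps$ reduces the statement to the exact lattice law of \cref{thm:law}, followed by an Abelian-type limit near $\eps=0$.

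\textbf{Reduction.} Given $\eps>0$, the lattice process has $q=(1+h)^{-\eps}$, $\ell=\eps\log(1+h)$, and $\zeta=\zb/q_0$ with $\zb\sim\Lambda_q$. By independence,
\[
\P(\zeta>t,\ \eps>0)=\int_{(0,\infty)}\P_{\Lambda_{q(e)}}\big(\zb>q_0t\big)\,\P(\eps\in\dd e).
\]
Write $u:=q_0t$ and $\gamma:=\log(1+h)$. The target becomes $u\int \P_{\Lambda_{q(e)}}(\zb>u)\,\P(\eps\in \dd e)\to\pi(0)/\gamma$.

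\textbf{Scaling.} For small $e$, $\zb$ concentrates near its mean $1/(1-q)\approx 1/(\gamma e)$, with $\CV\sim\sqrt{\ell/2}\to0$ by \cref{cor:moments}. Substituting $e=v/(\gamma u)$ turns the integral into
\[
\frac{1}{\gamma}\int_0^{\infty}\P_{\Lambda_{q(v/(\gamma u))}}(\zb>u)\,\pi\big(v/(\gamma u)\big)\,\dd v
\]
on the region where $e$ lies in the neighbourhood carrying the density. Pointwise in $v$, as $u\to\infty$ we have $e\to0$. The rescaled variable $\zb/u=(\gamma e\,\zb)/v$ then satisfies $\gamma e\,\zb\to1$ in $L^2$: its mean is $\gamma e/(1-q)\to1$ and its variance is $(\gamma e)^2/(1-q^2)\to0$, exactly as in \cref{prop:fluid}. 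So the integrand tends to $\mathbf 1_{\{v<1\}}\pi(0)$, whose integral is $\pi(0)/\gamma$. This matches the fluid computation $\P(T^\ast>t)=\P(\eps<1/(gt))$, which gives the same constant and proves the final remark.

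\textbf{Justifying the limit (main obstacle).} The hard part is dominated convergence in $v$, together with control of $\eps$ outside the neighbourhood.

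First, the mass of $e\ge\delta$ contributes $u\,\P(\zb>u)$ with $q\le(1+h)^{-\delta}$. Take the tail bound from \cref{thm:law}\ref{it:slow} or Markov with second moments: $\P(\zb>u)\le\E\zb^2/u^2=2/((1-q)(1-q^2)u^2)$, which is uniformly $O(u^{-2})$ for $e\ge\delta$. Hence this piece is $o(1/u)$.

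Second, for $e<\delta$, $\pi$ is bounded by continuity. For $v\le2$ the integrand is at most $\sup\pi$. For $v>2$, Chebyshev gives
\[
\P(\zb>u)\le \frac{\Var\zb}{(u-\E\zb)^2}\le \frac{C}{(\gamma e)(u-1/(1-q))^2},
\]
using $1-q\ge c\gamma e$ uniformly for small $e$ and $1-q^2\ge c\gamma e$. Since $\E\zb\le C'u/v$, this is $O(u/(v\,u^2(1-C'/v)^2))$. Multiplying by the factor $u$ gives an integrand bound of $O(1/v^{\,})$, which is not integrable.

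To fix this I will use fourth moments instead. \Cref{thm:law}\ref{it:moments} gives $\E(\zb-\E\zb)^4=O((1-q)^{-2})$ from the partition formula (alternatively use exponential Chernoff via \cref{thm:law}\ref{it:laplace}). That yields a bound $O(v^{-2})$, which is integrable, and dominated convergence completes the limit.

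\textbf{Infinite mean.} Finally, $\E[\zeta\mid\eps>0]=\infty$ follows immediately, because a tail $\sim c/t$ with $c>0$ has a divergent integral.
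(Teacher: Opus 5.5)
Your argument is correct and has the same skeleton as the paper's proof. You condition on $\eps$, rescale $\eps\propto1/t$, use $\E\zb=1/(1-q)$ and $\Var\zb=1/(1-q^2)$ to get the pointwise limit $\mathbf 1_{\{v<1\}}\pi(0)$, and pass to the limit under the integral.

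The difference is the tail control. The paper cuts at $\eps=4C/t$ with $C=(q_0\log(1+h))^{-1}$, i.e.\ at $v=4$ in your variables. It then applies Chernoff with $s=\frac12$: $\E e^{\zb/2}\le e^{1/(1-q)}$ together with $1/(1-q)\le1+1/(\eps\log(1+h))$ gives $\P(\zeta_\eps>t)\le e^{1-q_0t/4}$ uniformly in $\eps\ge4C/t$. That one bound covers both the mass outside the density neighbourhood and the large-$v$ part of the neighbourhood, so only bounded convergence on a compact interval remains.

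You instead cut at a fixed $\delta$ and use polynomial moments. For $\eps\ge\delta$ you use second moments. For $2<v<\gamma u\delta$ you use fourth central moments, which are indeed $6/(1-q^4)+3/(1-q^2)^2=O((1-q)^{-2})$. You therefore need a dominating function on an unbounded $v$-range. This is longer than the paper's route, but it uses only finitely many moments of the holding times rather than their exponential moments.

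Your derivation of $\E[\zeta\mid\eps>0]=\infty$ from the tail asymptotic is also fine. The paper reads it off directly from $\E\zeta_\eps\ge1/(q_0\eps\log(1+h))$ and $\int_0\pi(\eps)\eps^{-1}\dd\eps=\infty$.

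There is one bookkeeping slip. After the substitution $e=v/(\gamma u)$, the Jacobian has already absorbed the prefactor $u$. The $v$-integrand is therefore $\P(\zb>u)\,\pi(v/(\gamma u))$, and there is no further factor $u$ to multiply by. Chebyshev then gives an integrand of order $1/(uv)$, whose integral over $2<v<\gamma u\delta$ is $O(u^{-1}\log u)\to0$. So the second moment was already enough, by direct estimation rather than by domination. Likewise, your fourth-moment bound is really $O(u^{-2}v^{-2})$, stronger than the $O(v^{-2})$ you state, so the conclusion is unaffected.
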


\begin{proof}
Write $a:=\log(1+h)$, $C:=(q_0a)^{-1}$, and $\zeta_\eps$ for the explosion time given $\eps>0$. \emph{Infinite mean.} $\E\zeta_\eps=1/(q_0(1-e^{-a\eps}))\ge1/(q_0a\eps)$, and $\int_0\pi(\eps)\eps^{-1}\dd\eps=\infty$.

\emph{Large $\eps$.} Chernoff's bound with $s=\frac12$ and the inequality $-\log(1-y/2)\le y$ on $[0,1]$ give $\P(\zb>x)\le e^{-x/2}\E e^{\zb/2}\le\exp\{-x/2+1/(1-q)\}$. Since $1/(1-q)\le1+1/(a\eps)$, every $\eps\ge4C/t$ satisfies $\P(\zeta_\eps>t)\le e^{1-q_0t/4}$, uniformly. The contribution of $\{\eps\ge4C/t\}$ to $t\,\P(\zeta>t,\eps>0)$ is therefore at most $t\,e^{1-q_0t/4}\to0$.

\emph{Small $\eps$.} For $t$ large, $4C/t$ lies in the neighbourhood where $\pi$ is a density, and the substitution $\eps=u/t$ gives
\[
t\,\P(\zeta>t,\ 0<\eps<4C/t)=\int_0^{4C}\pi(u/t)\,\P(\zeta_{u/t}>t)\dd u .
\]
For fixed $u>0$, \cref{cor:moments} gives $\E[\zeta_{u/t}/t]=1/(q_0t(1-e^{-au/t}))\to C/u$ and $\Var(\zeta_{u/t}/t)=1/(q_0^2t^2(1-e^{-2au/t}))\to0$, so $\P(\zeta_{u/t}>t)\to\mathbf 1_{\{u<C\}}$ for $u\ne C$. The integrand is bounded, since $\pi$ is bounded near $0$ by continuity, so dominated convergence gives the limit $\int_0^{4C}\pi(0)\mathbf 1_{\{u<C\}}\dd u=\pi(0)C$. Dividing by $\P(\eps>0)$ proves the claim. The fluid statement is $\P(0<\eps<1/(gt))\sim\pi(0)/(gt)$.
\end{proof}

For a log-normal prior on $r$ matching the interval above, with $\beta=1.01$ and convention~B, $\P(r>1)=\NumPexplode$. Conditional on $r>1$, the median fluid waiting time is $\NumPriorMedian$ months, $\P(T^\ast>5\text{ years})=\NumPgtFive$, $\P(T^\ast>10\text{ years})=\NumPgtTen$, and the tail constant of \cref{prop:uncertain} is $\NumParetoC$ months (\cref{fig:uncertainty}). These values validate the calculation under the selected prior; they do not validate the prior or the underlying interval. The intrinsic randomness of \cref{thm:law} has an exponential tail, so it is \emph{parameter} uncertainty that makes the expected waiting time infinite, even though explosion is certain given $\eps>0$.

\begin{figure}[t]
\centering
\includegraphics[width=0.54\textwidth]{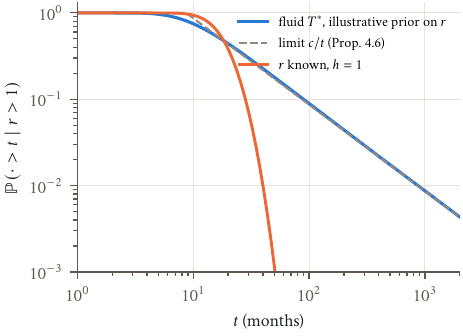}
\caption{Survival functions conditional on explosion. Blue: the fluid waiting time $T^\ast=1/(\eps g)$ mixed over an illustrative log-normal prior on $r$ that matches the interval $(0.727,2.094)$ of \cite[note~6]{CASP26}; this is not a fitted predictive distribution for actual AI development. Dashed: the limit $c/t$ of \cref{prop:uncertain}. Orange: intrinsic randomness only, with $r=\NumR$ known and doubling-sized ideas (\cref{thm:law}\ref{it:series}).}
\label{fig:uncertainty}
\end{figure}

\section{Fixed-sample detection and the cube-root law}\label{sec:detect}

The first priority in \cite{CASP26} is visibility, and one item on it is detecting the onset of an explosion. We study a specific, well-defined binary experiment, chosen to be as favourable to the observer as possible.

\paragraph{The experiment.} Both candidate processes start at a known origin. The monitor observes the first $n$ complete waiting times $(\tau_0,\dots,\tau_{n-1})$, so it waits until the $n$-th idea; the elapsed sampling time $T_n$ is itself random. The baseline rate $q_0$ and the alternative increment $\ell>0$ are known design parameters. After the $n$-th idea the monitor makes one decision between
\begin{align*}
&H_0:\ q_k\equiv q_0 &&(\text{steady exponential progress},\ r=1),\\
&H_1:\ q_k=q_0e^{k\ell} &&(\text{accelerating progress from the origin}).
\end{align*}
Let $P_0^n,P_1^n$ be the two laws: independent $\Exp(q_0)$, respectively independent $\Exp(q_0e^{k\ell})$. Define the minimax error and the minimum fixed-sample size for this experiment,
\[
\err_n(\ell):=\inf_\psi\max\big\{P_0^n(\psi=1),\,P_1^n(\psi=0)\big\},\qquad n^\ast(\alpha,\ell):=\min\{n:\err_n(\ell)\le\alpha\},
\]
where the infimum runs over all (possibly randomised) tests based on $(\tau_0,\dots,\tau_{n-1})$, and write $z_\alpha:=\Phi^{-1}(1-\alpha)$. This is not an anytime alarm and not the detection of an unknown change point: the alternative accelerates from the beginning of the record, and the error bounds below must not be reused for repeated testing after every idea, which needs its own false-alarm criterion.

\begin{theorem}[Cube-root law for the fixed-sample experiment]\label{thm:detect}
\leavevmode
\begin{enumerate}[label=(\roman*)]
\item\label{it:det-lower} For all $\ell>0$ and $\alpha\in(0,\frac14)$: \quad $n^\ast(\alpha,\ell)\ \ge\ \big(6\log\tfrac1{4\alpha}\big)^{1/3}\,\ell^{-2/3}$.
\item\label{it:det-upper} For all $\ell>0$ and $n\ge1$: \quad $\err_n(\ell)\le\exp\!\Big(-\dfrac{\ell^2(n-1)^3}{24(1+n\ell/2)}\Big)$. Hence $n^\ast(\alpha,\ell)\le n$ for every $n$ with $\ell^2(n-1)^3\ge24(1+n\ell/2)\log\frac1\alpha$.
\item\label{it:det-sharp} For every $\alpha\in(0,\frac12)$: \quad $\displaystyle\lim_{\ell\downarrow0}\ \ell^{2/3}\,n^\ast(\alpha,\ell)=(12z_\alpha^2)^{1/3}$.
\end{enumerate}
\end{theorem}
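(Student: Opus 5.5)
Throughout I will standardise by the known baseline, $x_k:=q_0\tau_k$. Under $H_0$ the $x_k$ are i.i.d.\ $\Exp(1)$, and under $H_1$ they are independent $\Exp(\lambda_k)$ with $\lambda_k:=e^{k\ell}$. The log-likelihood ratio is
\[
\Lambda_n=\sum_{k<n}\big(k\ell-(e^{k\ell}-1)x_k\big).
\]
All three parts compare the pair $(P_0^n,P_1^n)$ through divergences that factorise over $k$. The basic scale is
\[
\sum_{k<n}(k\ell)^2\approx \frac{\ell^2n^3}{3}.
\]

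\emph{Part \ref{it:det-lower}.} I use the Bretagnolle--Huber inequality. The minimax error is at least half the sum of the two errors, and hence at least $\frac12(1-\mathrm{TV})\ge\frac14 e^{-\KL}$. I pick the direction $\KL(P_1^n\|P_0^n)$, which is the small one. Per coordinate it equals
\[
\KL(\Exp(\lambda)\|\Exp(1))=1/\lambda-1+\log\lambda=e^{-y}-1+y\le y^2/2,\qquad y=k\ell.
\]
Summing gives $\KL\le\frac{\ell^2}{2}\sum_{k<n}k^2\le \ell^2n^3/6$. So $\err_n\le\alpha$ forces $\frac14e^{-\ell^2n^3/6}\le\alpha$, which rearranges to the stated bound.

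\emph{Part \ref{it:det-upper}.} I take the likelihood-ratio test with threshold $1$. Both of its errors are bounded by the Bhattacharyya coefficient $\int\sqrt{p_0p_1}$. For exponentials this is $2\sqrt\lambda/(1+\lambda)=\sech(y/2)$ with $y=k\ell$, so the coefficient equals $\prod_{k<n}\sech(k\ell/2)$. The remaining step is an elementary inequality, $\log\cosh u\ge u^2/(2(1+u))$ for $u\ge0$. I would check it by comparing derivatives, or by splitting into the regimes $u\le1$ (using $\log\cosh u\ge u^2/2-u^4/12$) and $u\ge1$ (using $\log\cosh u\ge u-\log2$). With $u=k\ell/2\le n\ell/2$, the exponent is at least
\[
\frac{\ell^2}{8(1+n\ell/2)}\sum_{k<n}k^2=\frac{\ell^2(n-1)n(2n-1)}{48(1+n\ell/2)}.
\]
Since $n(2n-1)\ge2(n-1)^2$, this gives the bound in \ref{it:det-upper}, and the sample-size consequence is immediate.

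\emph{Part \ref{it:det-sharp}.} Parts \ref{it:det-lower}--\ref{it:det-upper} already show $n^\ast\asymp\ell^{-2/3}$. So I fix $c>0$, set $n=\lfloor c\,\ell^{-2/3}\rfloor$, and note that $\max_{k<n}k\ell\le n\ell\to0$.
\begin{itemize}
\item Under $H_0$, $\Lambda_n$ has mean $-\sum_k(e^{k\ell}-1-k\ell)\to-\sigma^2/2$ and variance $\sum_k(e^{k\ell}-1)^2\to\sigma^2$, where $\sigma^2:=c^3/3$.
\item Under $H_1$, the standardised variables $\lambda_kx_k$ are i.i.d.\ $\Exp(1)$, so the same computation gives mean $\to+\sigma^2/2$ and variance $\to\sigma^2$.
\end{itemize}
The summands are uniformly negligible, since each is at most $O(n\ell)$ times an exponential, so the Lindeberg (or Lyapunov, with third moments) CLT gives $\Lambda_n\Rightarrow N(\mp\sigma^2/2,\sigma^2)$ under $H_0$ and $H_1$ respectively. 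This means the experiments converge to the Gaussian shift $N(0,1)$ versus $N(\sigma,1)$. The minimax test there is the symmetric likelihood-ratio test, with error $\Phi(-\sigma/2)$.

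To pass to the limit rigorously, I argue as follows. The minimax error is a continuous functional of the law of the likelihood ratio, since $2\,\err_n=1-\mathrm{TV}$ up to optimising the threshold for the maximum. Convergence in distribution of $\Lambda_n$ under both hypotheses, together with continuity of the limit law, then gives $\err_n\to\Phi(-\tfrac12\sqrt{c^3/3})$. This convergence of minimax risks is the main obstacle. I would handle it directly by showing that the optimal thresholds converge to $0$ and that the distribution functions converge uniformly (Pólya's theorem).

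To conclude, $\err_n$ is nonincreasing in $n$, because more data cannot hurt. Solving $\Phi(-\tfrac12\sqrt{c^3/3})=\alpha$ gives $c^3=12z_\alpha^2$. Taking $c$ slightly below this value gives $\err>\alpha$ for small $\ell$, and taking $c$ slightly above gives $\err<\alpha$. Sandwiching $n^\ast$ between these two choices yields $\ell^{2/3}n^\ast\to(12z_\alpha^2)^{1/3}$.
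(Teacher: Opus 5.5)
Your proposal is correct and follows the paper's proof essentially step for step: Bretagnolle--Huber with $\KL(P_1^n\|P_0^n)\le\frac{\ell^2}{2}\sum_{k<n}k^2$ for (i); the Bhattacharyya coefficient $\prod_{k<n}\sech(k\ell/2)$ together with $\log\cosh u\ge u^2/(2(1+u))$ for (ii); and, for (iii), the Lyapunov CLT at $n=\lfloor c\ell^{-2/3}\rfloor$, P\'olya's theorem and the monotonicity of $\err_n$. The one step to state more carefully is the reduction $\err_n=\inf_t\max\{P_0^n(\Lambda_n>t),\,P_1^n(\Lambda_n\le t)\}$, which the paper obtains from the Neyman--Pearson lemma using that $\Lambda_n$ has a continuous law for $n\ge2$ (your relation ``$2\err_n=1-\mathrm{TV}$'' is not exact for the max criterion); once that reduction is in place, uniform convergence in $t$ suffices, and you do not need to show that the optimal thresholds converge.
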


The exponent $2/3$ has a one-line explanation: the log-rate drifts by $\ell$ per idea, so after $n$ ideas the Kullback--Leibler divergence is $\asymp\ell^2\sum_{k<n}k^2\asymp\ell^2n^3$, and reliable discrimination needs this to be of order one.

\begin{lemma}\label{lem:logcosh}
For $x\ge0$, $\tanh x\ge x/(1+x)$, and consequently $\log\cosh x\ge x^2/(2(1+x))$.
\end{lemma}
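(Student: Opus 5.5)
The inequality $\tanh x\ge x/(1+x)$ is equivalent, for $x>0$, to $(1+x)\tanh x\ge x$, that is, to $\tanh x\ge x(1-\tanh x)$. With $\tanh x=(1-e^{-2x})/(1+e^{-2x})$ and $1-\tanh x=2e^{-2x}/(1+e^{-2x})$, the positive denominator $1+e^{-2x}$ clears. The claim becomes
\[
1-e^{-2x}\ \ge\ 2x\,e^{-2x},\qquad\text{equivalently}\qquad e^{2x}\ge 1+2x ,
\]
which is the elementary convexity bound $e^y\ge1+y$. At $x=0$ both sides vanish, so the first inequality holds on all of $[0,\infty)$.

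For the second inequality I will integrate. Since $\frac{\dd}{\dd u}\log\cosh u=\tanh u$ and $\log\cosh0=0$, the first part gives
\[
\log\cosh x=\int_0^x\tanh u\dd u\ \ge\ \int_0^x\frac{u}{1+u}\dd u .
\]
It then suffices to show $\int_0^x\frac{u}{1+u}\dd u\ge\frac{x^2}{2(1+x)}$. On $[0,x]$ we have $1+u\le1+x$, so $\frac{u}{1+u}\ge\frac{u}{1+x}$. Integrating this pointwise bound gives $x^2/(2(1+x))$.

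Neither step is a real obstacle. The only point needing care is the algebraic reduction of the $\tanh$ inequality to $e^{2x}\ge1+2x$, and the endpoint $x=0$, which holds with equality.
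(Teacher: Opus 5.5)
Your proof is correct. It differs from the paper's mainly in the first step.

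For $\tanh x\ge x/(1+x)$, the paper uses a monotonicity argument. It sets $g(x)=(1+x)\tanh x-x$, notes $g(0)=0$, and computes $g'(x)=2\big((1+2x)e^{2x}-1\big)/(e^{2x}+1)^2\ge0$. You instead observe that, once the positive denominator is cleared, the inequality itself is \emph{equivalent} to $e^{2x}\ge1+2x$, so no auxiliary function or derivative is needed. Both routes end at the tangent-line bound for the exponential: the paper's at $e^{-2x}\le1+2x$, yours at $e^{2x}\ge1+2x$.

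For the second inequality the paper again differentiates. It shows that $f(x)=\log\cosh x-x^2/(2(1+x))$ satisfies $f'(x)\ge\frac{x}{1+x}-\frac{x(x+2)}{2(1+x)^2}=\frac{x^2}{2(1+x)^2}\ge0$. You instead integrate $\tanh u\ge u/(1+u)\ge u/(1+x)$ over $[0,x]$. Both rest on the same idea, a comparison starting from zero, but your version has two small advantages. It makes the word ``consequently'' transparent. It also passes through the sharper intermediate bound $\log\cosh x\ge x-\log(1+x)$.

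Either argument is fully adequate for the lemma's use in the proof of \cref{thm:detect}.
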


\begin{proof}
Let $g(x):=(1+x)\tanh x-x$. Then $g(0)=0$ and $g'(x)=(1+x)\sech^2x-(1-\tanh x)=2\big((1+2x)e^{2x}-1\big)/(e^{2x}+1)^2\ge0$. Next let $f(x):=\log\cosh x-x^2/(2(1+x))$. Then $f(0)=0$ and $f'(x)=\tanh x-\frac{x(x+2)}{2(1+x)^2}\ge\frac{x}{1+x}-\frac{x(x+2)}{2(1+x)^2}=\frac{x^2}{2(1+x)^2}\ge0$.
\end{proof}

\begin{proof}[Proof of \cref{thm:detect}]
The log-likelihood ratio is
\[
\mathcal L_n:=\log\frac{\dd P_1^n}{\dd P_0^n}=\sum_{k<n}\big(k\ell-(e^{k\ell}-1)q_0\tau_k\big).
\]
Under $P_0$, $q_0\tau_k=E_k$, and under $P_1$, $q_0\tau_k=e^{-k\ell}E_k$, with $(E_k)$ i.i.d.\ $\Exp(1)$.

\ref{it:det-lower} By the Bretagnolle--Huber inequality \cite{BH79}, \cite[Lemma~2.6]{Tsy09}, every test satisfies $P_0^n(\psi=1)+P_1^n(\psi=0)\ge\frac12\exp(-\KL(P_1^n\|P_0^n))$. Moreover
\[
\KL(P_1^n\|P_0^n)=\sum_{k<n}\big(k\ell-1+e^{-k\ell}\big)\le\frac{\ell^2}{2}\sum_{k<n}k^2<\frac{\ell^2n^3}{6}.
\]
If both errors are at most $\alpha$, then $2\alpha\ge\frac12e^{-\ell^2n^3/6}$, which rearranges to the claim.

\ref{it:det-upper} Take $\psi:=\mathbf 1\{\mathcal L_n>0\}$. By Markov's inequality, $P_0^n(\mathcal L_n>0)\le\E_0e^{\mathcal L_n/2}$ and $P_1^n(\mathcal L_n\le0)\le\E_1e^{-\mathcal L_n/2}$. Both equal the Bhattacharyya coefficient $\prod_{k<n}\int\sqrt{p_{0,k}p_{1,k}}$. For exponentials with rates $q_0$ and $q_0e^{k\ell}$ each factor is $2e^{k\ell/2}/(1+e^{k\ell})=\sech(k\ell/2)$. By \cref{lem:logcosh},
\[
-\sum_{k<n}\log\sech\frac{k\ell}{2}\ \ge\ \sum_{k<n}\frac{(k\ell/2)^2}{2(1+k\ell/2)}\ \ge\ \frac{\ell^2}{8(1+n\ell/2)}\sum_{k<n}k^2\ \ge\ \frac{\ell^2(n-1)^3}{24(1+n\ell/2)},
\]
using $\sum_{k<n}k^2=(n-1)n(2n-1)/6\ge(n-1)^3/3$.

\ref{it:det-sharp} \emph{Step 1: reduction to the likelihood-ratio test.} For $n\ge2$, $\mathcal L_n$ has a continuous law under both hypotheses, as a nondegenerate linear combination of independent exponentials. By the Neyman--Pearson lemma \cite{LR05}, for every test $\psi$ there is a threshold $t$ with $P_0(\mathcal L_n>t)=P_0(\psi=1)$ and $P_1(\mathcal L_n\le t)\le P_1(\psi=0)$. Hence $\err_n=\inf_t\max\{P_0(\mathcal L_n>t),P_1(\mathcal L_n\le t)\}$. Also $\err_n$ is nonincreasing in $n$, since a test may ignore data.

\emph{Step 2: Gaussian limit.} Fix $c>0$, let $n=n(\ell):=\lfloor c\ell^{-2/3}\rfloor$ and $\delta:=n\ell$, so $\delta\to0$ and $\ell^2n^3\to c^3$ as $\ell\downarrow 0$. For $x\ge0$, Taylor's theorem gives $\frac{x^2}2\le e^x-1-x\le\frac{x^2}{2}e^x$ and $\frac{x^2}{2}e^{-x}\le x-1+e^{-x}\le\frac{x^2}2$, and also $x\le e^x-1\le xe^x$ and $xe^{-x}\le 1-e^{-x}\le x$. With $S_2:=\sum_{k<n}(k\ell)^2\to c^3/3$, these give the following. Under $P_0$, $\E_0\mathcal L_n=-\sum_{k<n}(e^{k\ell}-1-k\ell)\in[-e^\delta S_2/2,-S_2/2]$ and $\Var_0\mathcal L_n=\sum_{k<n}(e^{k\ell}-1)^2\in[S_2,e^{2\delta}S_2]$. Under $P_1$, $\E_1\mathcal L_n=\sum_{k<n}(k\ell-1+e^{-k\ell})\in[e^{-\delta}S_2/2,S_2/2]$ and $\Var_1\mathcal L_n=\sum_{k<n}(1-e^{-k\ell})^2\in[e^{-2\delta}S_2,S_2]$. The Lyapunov ratio is at most $\E|E_1-1|^3e^{3\delta}\sum_{k<n}(k\ell)^3/(e^{-3\delta}S_2^{3/2})\le\E|E_1-1|^3e^{6\delta}\,\delta\,S_2^{-1/2}\to0$. By Lyapunov's central limit theorem, $\mathcal L_n\Rightarrow\mathcal N(\mp c^3/6,\,c^3/3)$ under $P_0$ and $P_1$ respectively.

\emph{Step 3: conclusion.} The limits are continuous, so by P\'olya's theorem the distribution functions converge uniformly, and therefore
\[
\err_{n(\ell)}(\ell)\to\inf_t\max\Big\{\bar\Phi\Big(\frac{t+\mu}{\sigma}\Big),\Phi\Big(\frac{t-\mu}{\sigma}\Big)\Big\}=\Phi\Big(-\frac\mu\sigma\Big)=\Phi\Big(-\frac{c^{3/2}}{\sqrt{12}}\Big),
\]
with $\mu=c^3/6$ and $\sigma^2=c^3/3$; the infimum is attained at $t=0$ by symmetry. Let $c^\ast:=(12z_\alpha^2)^{1/3}$. If $c<c^\ast$, the limit exceeds $\alpha$, so $\err_{n(\ell)}>\alpha$ for small $\ell$, and by monotonicity $n^\ast>c\ell^{-2/3}-1$. If $c>c^\ast$, the limit is below $\alpha$, so $n^\ast\le c\ell^{-2/3}$ for small $\ell$. Since $c$ was arbitrary, $\ell^{2/3}n^\ast\to c^\ast$.
\end{proof}

How much time remains at the decision? The next statement answers this with a ratio of expectations and with predictive quantiles.

\begin{corollary}[Remaining duration at the decision]\label{cor:warning}
Under $H_1$, at the arrival time $T_n$ of the $n$-th idea, $\zeta-T_n$ is independent of $\F_{T_n}$ and distributed as $q^n\zeta$, where $q=e^{-\ell}$. Hence its quantiles are $q^n$ times the quantiles of $\zeta$, and
\[
W_n:=\frac{\E_1[\zeta-T_n]}{\E_1\zeta}=q^n .
\]
Let $W^\ast(\alpha,\ell):=W_{n^\ast(\alpha,\ell)}$. Then $W^\ast\le\exp\{-(6\log\frac1{4\alpha})^{1/3}\ell^{1/3}\}$ for all $\ell>0$ and $\alpha\in(0,\frac14)$, and
\[
\log W^\ast(\alpha,\ell)=-(12z_\alpha^2\,\ell)^{1/3}\,(1+o(1))\qquad(\ell\downarrow0).
\]
\end{corollary}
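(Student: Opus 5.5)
The first claim is a direct consequence of the structure of the lattice process under $H_1$, and the rest follows by combining it with \cref{thm:detect}. Under $H_1$ the rates are $q_k=q_0e^{k\ell}=q_0\rho^k$ with $\rho=e^{\ell}$ and $q=e^{-\ell}$. Using the representation \eqref{eq:zeta}, as in the proof of \cref{prop:logperiodic}, write
\[
\zeta-T_n=\sum_{j\ge0}\frac{e_{n+j}}{q_0\rho^{n+j}}=q^n\sum_{j\ge0}\frac{e_{n+j}}{q_0\rho^j}.
\]
The last sum is a function of $(e_{n+j})_{j\ge0}$ alone, so it has the law of $\zeta$ and is independent of $(e_0,\dots,e_{n-1})$. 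Those variables generate $\F_{T_n}$ in the lattice case; alternatively, invoke \cref{thm:selfsim} at the stopping time $\tau=T_n$, where $q_{N_\tau}=q_0\rho^n$. Scaling by the deterministic factor $q^n$ shows that the quantiles of $\zeta-T_n$ are $q^n$ times those of $\zeta$. Taking expectations, which are finite by \cref{cor:moments}, gives $W_n=q^n$.

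Next, since $W^\ast=\exp(-\ell\,n^\ast(\alpha,\ell))$ is decreasing in $n^\ast$, any lower bound on $n^\ast$ yields an upper bound on $W^\ast$. Plugging in \cref{thm:detect}\ref{it:det-lower} gives
\[
\ell\,n^\ast\ge\big(6\log\tfrac1{4\alpha}\big)^{1/3}\ell^{1/3},
\]
which is the stated non-asymptotic bound, valid for all $\ell>0$ and $\alpha\in(0,\frac14)$.

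For the asymptotic statement, note that $\log W^\ast=-\ell\,n^\ast=-\ell^{1/3}\cdot\ell^{2/3}n^\ast$. By \cref{thm:detect}\ref{it:det-sharp}, $\ell^{2/3}n^\ast\to(12z_\alpha^2)^{1/3}$, so
\[
\log W^\ast=-(12z_\alpha^2)^{1/3}\ell^{1/3}(1+o(1)).
\]
This equals $-(12z_\alpha^2\ell)^{1/3}(1+o(1))$.

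No step is a real obstacle. The only point needing care is the independence claim, since $\F_{T_n}$ was defined as an augmented filtration. Rather than argue about generators, I would cite \cref{thm:selfsim} with $\tau=T_n$, which handles this. The rest is direct substitution into the earlier results.
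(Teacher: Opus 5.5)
Your proposal is correct and follows essentially the same route as the paper. The paper also gets the first claim from \cref{thm:selfsim} at $\tau=T_n$ or directly from \eqref{eq:zeta}, and both bounds from $W^\ast=e^{-\ell n^\ast}$ combined with parts (i) and (iii) of \cref{thm:detect}.
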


\begin{proof}
The first statement is \cref{thm:selfsim} at $\tau=T_n$, or \eqref{eq:zeta} directly. The bounds follow from $W^\ast=e^{-\ell n^\ast}$ and \cref{thm:detect}.
\end{proof}

$W^\ast$ is the remaining expected duration normalised by the initial expected explosion time. It is \emph{not} the expectation of the fraction remaining along a realised trajectory, $\E_1[(\zeta-T_n)/\zeta]$, whose denominator is random and contains the remaining time. For doubling-sized ideas and $n=9$, a Monte Carlo estimate of the latter is $\NumPathFrac\pm\NumPathFracSE$, whereas $W_9=\NumRatioMeans$ (\cref{app:numerics}).

\begin{figure}[t]
\centering
\includegraphics[width=\textwidth]{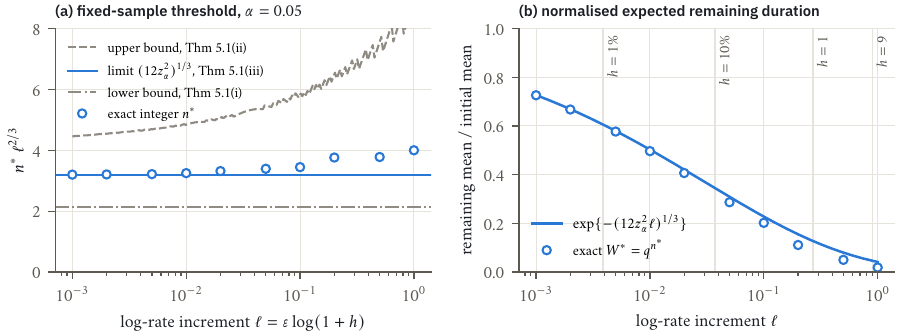}
\caption{The fixed-sample experiment of \cref{sec:detect}: known baseline rate, known increment $\ell$, one decision after the first $n$ waiting times. (a) The minimum fixed-sample size $n^\ast$ at $\alpha=0.05$, rescaled by $\ell^{2/3}$. Circles are exact integer thresholds from the hypoexponential law of the likelihood ratio (\cref{app:numerics}); the horizontal line is the limit $(12z_\alpha^2)^{1/3}\approx\NumLAN$; the grey curves are the non-asymptotic bounds of \cref{thm:detect}. (b) The normalised expected remaining duration $W^\ast=q^{n^\ast}$ (remaining mean divided by initial mean). Vertical lines mark idea sizes $h$ at $\eps=\NumEps$.}
\label{fig:detection}
\end{figure}

\paragraph{Reading the threshold.} Since $W^\ast\to1$ as $\ell\to0$, an explosion made of small ideas becomes distinguishable from steady growth, in this experiment, while most of the expected remaining duration is still ahead. The convergence is slow, at rate $\ell^{1/3}$, and for lumpy progress $W^\ast$ is small. \Cref{tab:detect} gives the calibrated values, including predictive quantiles of the remaining time. The data-processing inequality shows that any statistic of the same first-$n$ record, for instance a coarsened version of it, cannot do better than the threshold of \cref{thm:detect}. This says nothing about a real monitor that has auxiliary information absent from the model.

\begin{remark}[No uniform fixed-sample test over unseparated alternatives]\label{rem:unseparated}
If $\ell$ is unknown and only $\ell>0$ is specified, no fixed-$n$ test has a nontrivial uniform error bound. Indeed $\KL(P_\ell^n\|P_0^n)=\sum_{k<n}(k\ell-1+e^{-k\ell})\to0$ as $\ell\downarrow0$, so $P_\ell^n\to P_0^n$ in total variation by Pinsker's inequality. If $P_0^n(\psi=1)\le\alpha$, then $P_\ell^n(\psi=1)\to P_0^n(\psi=1)\le\alpha$, and the worst-case type-II error over $\ell>0$ is at least $1-\alpha$. Meaningful formulations use a separated alternative $\ell\ge\ell_{\min}>0$, a local asymptotic power criterion, or a sequential design with an explicit criterion.
\end{remark}

\begin{remark}[Unknown baseline: an invariant reduction]\label{rem:baseline}
Suppose $\ell$ is specified but $q_0$ is unknown. The lower bound \ref{it:det-lower} still holds, since knowing $q_0$ can only help. The scale-invariant statistic $P_k:=\tau_k/\sum_{j<n}\tau_j$ removes $q_0$ exactly: integrating out $S=\sum_j\tau_j$ (Jacobian $S^{n-1}$) shows that $P$ is uniform on the simplex under $H_0$ and that, under rates $q_0e^{k\ell}$,
\[
\frac{f_\ell(P)}{f_0(P)}=\frac{\exp\{\ell\sum_{k<n}k\}}{\big(\sum_{k<n}e^{k\ell}P_k\big)^n}.
\]
The score at $\ell=0$ is $U_n=\sum_{k<n}k-n\sum_{k<n}kP_k$, whose null variance, by the Dirichlet covariance formula, is $\frac{n}{n+1}\sum_{k<n}(k-\bar k)^2\sim n^3/12$, against $\sum_kk^2\sim n^3/3$ for known $q_0$. This suggests the constant $(48z_\alpha^2)^{1/3}$, about $1.59$ times as many ideas. We have not proved this: it requires a composite minimax formulation, triangular-array limits under both hypotheses and a matching lower bound (\cref{sec:open}).
\end{remark}

\section{Model extensions: delay, diversion and an oversight race}\label{sec:brakes}

The three results of this section concern different models with different assumptions. They are not properties of one unified controlled stochastic system.

\subsection{A deterministic delayed-feedback model}

Training runs take months, and an idea improves the research workforce only once it has been deployed. Suppose deployment takes time $D$, so that effective labour depends on $A(t-D)$ while the difficulty of finding new ideas depends on the current frontier $A(t)$:
\begin{equation}\label{eq:lag}
\dot A(t)=c\,A(t)^{1-\beta}\big(kA(t-D)\big)^{\lambda},\qquad t>0,\qquad A=\varphi\ \text{on }[-D,0].
\end{equation}

\begin{theorem}[Delayed feedback]\label{thm:lag}
Let $\beta,\lambda,c,k,D>0$ and let $\varphi:[-D,0]\to(0,\infty)$ be continuous.
\begin{enumerate}[label=(\roman*)]
\item In the deterministic delayed model \eqref{eq:lag}, there is a unique solution on $[-D,\infty)$; it is finite and increasing on $[0,\infty)$, so there is no finite-time blow-up, for any $r$.
\item If $r=\lambda/\beta>1$, then in the deterministic delayed model
\[
\lim_{t\to\infty}\frac{\log\log A(t)}{t}=\frac{\log r}{D}.
\]
\item With $D=0$ (and $r>1$) the solution blows up at $T^\ast=(\eps ck^\lambda A(0)^\eps)^{-1}$.
\item \emph{Stochastic delayed model.} Let ideas of size $h$ arrive at intensity $(c/h)k^\lambda A_{t-}^{-\beta}A_{t-D}^{\lambda}$, with $A_s=A_0$ for $s\le0$. Then the process does not explode: $N_t<\infty$ for all $t$, a.s.
\end{enumerate}
\end{theorem}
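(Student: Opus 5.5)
The plan for (i) is the method of steps. On $[0,D]$ the delayed argument $A(t-D)=\varphi(t-D)$ is a known continuous positive function, so \eqref{eq:lag} becomes separable: $\frac{\dd}{\dd t}A(t)^\beta=\beta c k^\lambda A(t-D)^\lambda$. Hence
\[
A(t)^\beta=A(jD)^\beta+\beta ck^\lambda\int_{jD}^tA(s-D)^\lambda\dd s,\qquad t\in[jD,(j+1)D].
\]
Inducting over $j$, the integrand on each interval is a known, continuous, finite and positive function (it comes from the previous step). So $A$ is uniquely determined, finite, positive and strictly increasing on every $[jD,(j+1)D]$. Uniqueness holds because, for positive solutions, the equation for $A^\beta$ has a right-hand side not depending on the unknown on the current step. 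This rules out blow-up for every $r$.

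Part (iii) is the explicit solution of \eqref{eq:ode} with constant $ck^\lambda$, namely $A(t)^{-\eps}=A(0)^{-\eps}-\eps ck^\lambda t$.

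For (ii), write $x(t):=\log A(t)$ and aim for a two-sided delay recursion $x(t)\approx r\,x(t-D)+O(1)$.

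\emph{Divergence.} Monotonicity gives $A(t)^\beta\ge A(t-D)^\beta+\beta ck^\lambda D\min A^\lambda$, so $A(t)\to\infty$.

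\emph{Upper bound.} Since $A$ is increasing, $A(t)^\beta\le A(t-D)^\beta+\beta ck^\lambda D\,A(t-D)^\lambda$. As $\lambda>\beta$ and $A\to\infty$, this gives $x(t)\le r\,x(t-D)+K$ for $t$ large. Iterating, $x(t)\le C r^{t/D}$, so $\limsup_{t\to\infty} t^{-1}\log x(t)\le \log r/D$.

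\emph{Lower bound.} Fix small $\eta>0$ and keep only the last window of length $\eta$ in the integral:
\[
A(t)^\beta\ge\beta ck^\lambda\eta\,A(t-D-\eta)^\lambda,
\]
so $x(t)\ge r\,x(t-D-\eta)-K_\eta$. Because $x\to\infty$ and $r>1$, once $x$ exceeds $2K_\eta/(r-1)$ the constant is absorbed: $x(t)\ge r' x(t-D-\eta)$ with $r'$ arbitrarily close to $r$. This yields $\liminf_{t\to\infty} t^{-1}\log x(t)\ge\log r'/(D+\eta)$, and letting $\eta\to0$ and $r'\to r$ completes (ii).

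This sandwich is the main obstacle. The difficulty is choosing the window so that the delay constant is exactly $D$ and not $3D/2$, and making the additive constants negligible uniformly; I would handle it with the $\eta$-window above.

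For (iv), I would argue on successive intervals $[jD,(j+1)D]$. For $t$ in such an interval, $t-D\le jD$, so $A_{t-D}\le A_{jD}$; also $A_{t-}^{-\beta}\le A_0^{-\beta}$. The intensity on that interval is therefore bounded by the $\F_{jD}$-measurable constant $(c/h)k^\lambda A_0^{-\beta}A_{jD}^\lambda$. Assume by induction that $N_{jD}<\infty$ a.s. Coupling with (or comparing to) a Poisson process of this bounded rate, the number of ideas in $[jD,(j+1)D]$ is a.s. finite, so $N_{(j+1)D}<\infty$. The induction then gives $N_t<\infty$ for all $t$, a.s.
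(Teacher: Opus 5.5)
Your proposal is correct and follows the paper's proof essentially step by step: the method of steps for $A^\beta$ (the paper's $B$), the one-step upper recursion $x(t)\le r\,x(t-D)+K$, a short terminal window giving a lower recursion with delay slightly larger than $D$, and Poisson domination on each $[jD,(j+1)D]$ for (iv). The differences are cosmetic. The paper removes the additive constant exactly through the shift $v=\log B+\log(C\delta D)/(r-1)$, which satisfies $v(s+(1+\delta)D)\ge r\,v(s)$; you instead absorb the constant into some $r'<r$. For that you need the threshold $K_\eta/(r-r')$, since $2K_\eta/(r-1)$ only gives $r'=(r+1)/2$. In (iv) the paper bounds $A_{t-}^{-\beta}$ by $A_{jD}^{-\beta}$ rather than by $A_0^{-\beta}$, which does not affect the conclusion.
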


\begin{proof}
Put $B:=A^\beta$ and $C:=\beta ck^\lambda$. Then \eqref{eq:lag} becomes
\begin{equation}\label{eq:lagB}
\dot B(t)=C\,B(t-D)^{r}.
\end{equation}
The factor $A(t)^{1-\beta}$ has cancelled: diminishing returns at the frontier exactly offset the frontier's own growth.

(i) On $[jD,(j+1)D]$ the right-hand side of \eqref{eq:lagB} is a known continuous function, so $B$ is obtained by integration, step by step (the method of steps \cite{HVL93}). Conversely, any positive $C^1$ solution of \eqref{eq:lag} gives a solution of \eqref{eq:lagB}, which proves uniqueness. Since $\dot B>0$, $B$ is increasing on $[0,\infty)$, and $B(t)\ge B(0)+C(\min\varphi)^{\beta r}t\to\infty$.

(ii) \emph{Upper bound.} For $t\ge D$, $B(t+D)=B(t)+C\int_{t-D}^{t}B(v)^r\dd v\le B(t)+CDB(t)^r$. Once $B\ge1$, this gives $\log B(t+D)\le r\log B(t)+\log(1+CD)$. So $u:=\log B+\log(1+CD)/(r-1)$ satisfies $u(t+D)\le ru(t)$, and therefore $\limsup t^{-1}\log\log B(t)\le(\log r)/D$.

\emph{Lower bound.} Fix $\delta\in(0,1)$. For $t\ge D$,
\[
B(t+D)\ \ge\ C\int_{t+D-\delta D}^{t+D}B(u-D)^r\dd u\ \ge\ C\delta D\,B(t-\delta D)^r .
\]
So $v:=\log B+\log(C\delta D)/(r-1)$ satisfies $v(s+(1+\delta)D)\ge r\,v(s)$ for $s\ge D$. Since $B\to\infty$, $v$ is bounded below by some $v_{\min}>0$ on a window of length $(1+\delta)D$. Iterating gives $\liminf t^{-1}\log\log B(t)\ge(\log r)/((1+\delta)D)$, and we let $\delta\downarrow0$. Finally $\log\log A=\log\log B-\log\beta$.

(iii) This is \eqref{eq:ode} with $c$ replaced by $ck^\lambda$.

(iv) On $(jD,(j+1)D]$, $A_{t-}\ge A_{jD}$ and $A_{t-D}\le A_{jD}$. So the intensity is bounded by $Q_j:=(c/h)k^\lambda A_{jD}^{\lambda-\beta}$, which is $\F_{jD}$-measurable. By a thinning construction, conditionally on $\F_{jD}$ the number of ideas in that window is dominated by a Poisson variable with mean $Q_jD$, which is finite a.s. Induction on $j$ completes the proof.
\end{proof}

\Cref{thm:lag} makes precise, for this delayed-feedback model, the working paper's remark that some research steps are inherently sequential \cite[SM]{CASP26}. In the deterministic delayed model any fixed $D>0$ removes the singularity, and what remains is a tower-like growth law: the exponent of $A$ is multiplied by $r$ every $D$. With $D=3$ months and $r=\NumR$, $\log A$ doubles about every $\NumLagDouble$ months (\cref{fig:frictions}(a)). The proof uses the particular placement of the delayed labour term in \eqref{eq:lag}; it is not a statement about systems with an additional instantaneous feedback channel. For the stochastic delayed model we prove non-explosion only; whether $t^{-1}\log\log A_t\to(\log r)/D$ almost surely is open (\cref{sec:open}).

\subsection{Progressive diversion}

Suppose a share $\theta$ of the automated workforce is diverted from capabilities research to other uses, for example safety, oversight or beneficial applications.

\begin{proposition}[Progressive diversion]\label{prop:diversion}
Let $\dot A=cA^{1-\beta}\big((1-\theta(A))kA\big)^{\lambda}$, where the share retained for capabilities research is $1-\theta(a)=(a/A_0)^{-\gamma}$ for $a\ge A_0$, with $\gamma\ge0$ and $r>1$. Then $A$ explodes in finite time if and only if $\gamma<1-1/r$. At $\gamma=1-1/r$ growth is exponential, and for $\gamma>1-1/r$ it is polynomial. The same threshold holds for the Good process with rate $\propto a^{\eps-\gamma\lambda}$, by \cref{thm:dichotomy}.
\end{proposition}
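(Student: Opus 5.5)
The plan is to substitute the diversion law into the ODE and reduce it to a pure power law, then read off the regime from the exponent. For $A\ge A_0$ the right-hand side becomes
\[
\dot A=c\,k^\lambda A_0^{\gamma\lambda}\,A^{1-\beta+\lambda-\gamma\lambda}=C\,A^{1+\eps'},\qquad \eps':=\lambda-\beta-\gamma\lambda=\lambda\Big(1-\gamma-\frac1r\Big),\qquad C:=ck^\lambda A_0^{\gamma\lambda}.
\]
Since $\lambda>0$, the sign of $\eps'$ is the sign of $1-1/r-\gamma$. We also need $A(0)=A_0$ (or $A(0)\ge A_0$) and the fact that $A$ is increasing, so that the formula for $\theta$ applies along the whole trajectory. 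Both follow from $\dot A>0$.

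We then solve $\dot A=CA^{1+\eps'}$ by separation of variables, $\int_{A_0}^{A(t)}x^{-1-\eps'}\dd x=Ct$, and split into three cases.
\begin{itemize}
\item If $\eps'>0$, the integral $\int_{A_0}^\infty x^{-1-\eps'}\dd x=(\eps' A_0^{\eps'})^{-1}$ is finite, so $A$ blows up at $T=(\eps' C A_0^{\eps'})^{-1}$.
\item If $\eps'=0$, we get $A(t)=A_0e^{Ct}$, which is exponential.
\item If $\eps'<0$, we get $A(t)=\big(A_0^{|\eps'|}+|\eps'|Ct\big)^{1/|\eps'|}$, a polynomial of degree $1/|\eps'|=1/(\lambda(\gamma-1+1/r))$.
\end{itemize}
Together these give the "if and only if" for finite-time explosion. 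Equivalently, the first case is the Osgood criterion: $\int^\infty \dd x/(x\cdot x^{\eps'})<\infty$ exactly when $\eps'>0$.

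For the Good-process statement, the rate function is $\nu(a)=\kappa a^{\eps-\gamma\lambda}=\kappa a^{\eps'}$, and it is monotone. For lattice sizes, \cref{thm:dichotomy}\ref{it:dich-general} applies directly. The sum $\sum_n 1/q_n=\sum_n(\kappa A_0^{\eps'})^{-1}(1+h)^{-\eps' n}$ is geometric and converges exactly when $\eps'>0$. For arbitrary i.i.d.\ positive sizes, the process is a Good process with exponent $\eps'$ in place of $\eps$, so \cref{thm:dichotomy}\ref{it:dich-power} gives almost sure explosion iff $\eps'>0$, that is, iff $\gamma<1-1/r$.

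There is no real obstacle. The only points needing care are bookkeeping: checking that $\theta(a)\in[0,1)$ for $a\ge A_0$ (true since $\gamma\ge0$), and noting that "growth is polynomial" means polynomial in $t$ with the explicit exponent $1/|\eps'|$.
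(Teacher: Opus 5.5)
Your proposal is correct and follows the paper's proof. Both substitute the retained share to get $\dot A=ck^\lambda A_0^{\gamma\lambda}A^{1+\eps'}$ with $\eps'=\lambda(1-\gamma)-\beta$, then read off blow-up, exponential growth or polynomial growth from the sign of $\eps'$; your explicit solutions and your appeal to \cref{thm:dichotomy} for the Good process only fill in details the paper leaves implicit. One small nit: in the case $\eps'<0$ the solution $(A_0^{|\eps'|}+|\eps'|Ct)^{1/|\eps'|}$ is polynomial growth of order $t^{1/|\eps'|}$, not literally a polynomial of that degree.
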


\begin{proof}
The equation reads $\dot A=ck^\lambda A_0^{\gamma\lambda}A^{1+\eps_\gamma}$ with $\eps_\gamma:=\lambda(1-\gamma)-\beta$. It blows up if and only if $\eps_\gamma>0$, i.e.\ $\gamma<1-\beta/\lambda$. If $\eps_\gamma=0$ the growth is exponential, and if $\eps_\gamma<0$ then $A^{|\eps_\gamma|}$ grows linearly.
\end{proof}

At the working paper's central values $1-1/r=\NumGammaStar$: a diverted share of $\NumShareTen\%$ at ten times today's capability and $\NumShareThousand\%$ at a thousand times turns the singularity into exponential growth. A constant share $\theta<1$ only delays the blow-up, since $T^\ast$ scales like $(1-\theta)^{-\lambda}$. \Cref{prop:diversion} is a statement about the capability equation alone. Non-explosion is not a statement about whether oversight keeps pace, which depends on a separate equation (\cref{ex:diversion}).

\subsection{Fixed-allocation race between capabilities and oversight}

Let $S$ measure oversight capability (alignment, control, monitoring), produced by the diverted labour under its own idea production function. With a constant share $\theta\in(0,1)$:
\begin{equation}\label{eq:race}
\dot A=cA^{1-\beta}\big((1-\theta)kA\big)^\lambda,\qquad \dot S=c_SS^{1-\beta_S}\big(\theta kA\big)^{\lambda_S},\qquad A(0)=A_0,\ S(0)=S_0>0,
\end{equation}
with $\eps=\lambda-\beta>0$, $\beta_S,\lambda_S>0$, and $\eps_S:=\lambda_S-\beta_S$, the exponent with which oversight would explode if it were fed by its own output. Let $T^\ast$ be the blow-up time of $A$.

\begin{theorem}[Fixed-allocation asymptotic race regimes]\label{thm:race}
As $t\uparrow T^\ast$:
\begin{enumerate}[label=(\roman*)]
\item if $\lambda_S<\eps$, then $S(t)\to S(T^\ast)<\infty$;
\item if $\lambda_S\ge\eps$, then $S(t)\to\infty$, and
\[
\frac{S(t)}{A(t)}\longrightarrow
\begin{cases}
0,& \eps_S<\eps,\\[2pt]
\mathcal R(\theta):=\Big(\dfrac{c_S(\theta k)^{\lambda_S}}{c((1-\theta)k)^{\lambda}}\Big)^{1/\beta_S},&\eps_S=\eps,\\[8pt]
\infty,&\eps_S>\eps.
\end{cases}
\]
\end{enumerate}
Away from the critical equality $\eps_S=\eps$, changing a fixed allocation share $\theta\in(0,1)$ changes constants and dates but not the asymptotic regime of the ratio $S/A$. At the critical equality the limit $\mathcal R(\theta)$ depends on $\theta$.
\end{theorem}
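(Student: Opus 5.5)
The approach is to eliminate time and use $A$ itself as the clock. On $[0,T^\ast)$ the capability equation in \eqref{eq:race} is \eqref{eq:ode} with $c$ replaced by $c((1-\theta)k)^\lambda$. So $A$ is $C^1$, strictly increasing, finite, and $A(t)\uparrow\infty$ as $t\uparrow T^\ast$ (this is \cref{thm:lag}(iii) with $\theta$ absorbed into the constant). Hence $t\mapsto A(t)$ is a bijection from $[0,T^\ast)$ onto $[A_0,\infty)$, and we may write $S$ as a function of $a=A(t)$.

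Dividing the two equations in \eqref{eq:race} gives the separable equation
\[
\frac{\dd S}{\dd a}=K\,S^{1-\beta_S}\,a^{\lambda_S-1-\eps},\qquad K:=\frac{c_S(\theta k)^{\lambda_S}}{c((1-\theta)k)^{\lambda}} .
\]
Substituting $U:=S^{\beta_S}$ linearises it to $\dd U/\dd a=\beta_S K a^{\lambda_S-1-\eps}$. Therefore
\[
S(t)^{\beta_S}=S_0^{\beta_S}+\beta_S K\int_{A_0}^{A(t)}x^{\lambda_S-1-\eps}\dd x .
\]
This explicit formula already shows that $S$ is positive and finite on $[0,T^\ast)$, so the original system has a unique global solution there. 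The uniqueness follows because any positive $C^1$ solution must satisfy the same integrated identity; this mirrors the argument in \cref{thm:lag}(i).

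All remaining claims are read off from the integral as $A(t)\to\infty$.

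\textbf{Case $\lambda_S<\eps$.} The integrand is integrable at infinity, so $S(t)$ increases to the finite limit $\big(S_0^{\beta_S}+\beta_SK\int_{A_0}^\infty x^{\lambda_S-1-\eps}\dd x\big)^{1/\beta_S}$. This proves (i).

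\textbf{Case $\lambda_S=\eps$.} The integral is $\log(A/A_0)\to\infty$, so $S\to\infty$, but only like $(\log A)^{1/\beta_S}$. Here $\eps_S=\eps-\beta_S<\eps$, and $S/A\to0$, as claimed.

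\textbf{Case $\lambda_S>\eps$.} The integral equals $(A^{\lambda_S-\eps}-A_0^{\lambda_S-\eps})/(\lambda_S-\eps)$. Hence
\[
S^{\beta_S}\sim\frac{\beta_SK}{\lambda_S-\eps}\,A^{\lambda_S-\eps},
\qquad\text{so}\qquad
S\asymp A^{(\lambda_S-\eps)/\beta_S}.
\]
Comparing the exponent $(\lambda_S-\eps)/\beta_S$ with $1$ is the same as comparing $\lambda_S-\beta_S=\eps_S$ with $\eps$. This gives $S/A\to0$ when $\eps_S<\eps$ and $S/A\to\infty$ when $\eps_S>\eps$.

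\textbf{Critical equality $\eps_S=\eps$.} Here $\lambda_S-\eps=\beta_S$, so $S^{\beta_S}/A^{\beta_S}\to K$ and $S/A\to K^{1/\beta_S}=\mathcal R(\theta)$. The concluding remark about $\theta$ follows because $\theta$ enters only through $K$, a positive finite constant for $\theta\in(0,1)$. Away from the critical equality, $K$ affects only prefactors, and $\theta$ moves $T^\ast$ through the capability constant. At the critical equality, $K$ is the limit itself.

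I expect no serious obstacle. The only point needing care is the justification of the change of variables. It requires $\dot A>0$, which holds because $A>0$ on $[0,T^\ast)$, and it requires $A(t)\to\infty$ as $t\uparrow T^\ast$; these make limits in $t$ equivalent to limits in $a$. Some care is also needed in the case split, so that the borderline $\lambda_S=\eps$ is sent to the ratio-$0$ branch.
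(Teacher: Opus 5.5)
Your proof is correct and is essentially the paper's argument. Both linearise via $S^{\beta_S}$, integrate in closed form, and split on $\lambda_S$ versus $\eps$; the paper integrates in $t$ using the explicit solution $A(t)=(\eps c'(T^\ast-t))^{-1/\eps}$, which becomes your integral under the substitution $x=A(u)$. Using $A$ as the clock makes the critical limit $K^{1/\beta_S}=\mathcal R(\theta)$ appear directly, without the paper's ``short computation'' combining its constant with $(\eps c')^{1/\eps}$.
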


\begin{proof}
Write $c':=c((1-\theta)k)^\lambda$. Then $A(t)=(\eps c'(T^\ast-t))^{-1/\eps}$, and $\frac{\dd}{\dd t}S^{\beta_S}=\beta_Sc_S(\theta kA)^{\lambda_S}$. Hence
\[
S(t)^{\beta_S}=S_0^{\beta_S}+\beta_Sc_S(\theta k)^{\lambda_S}(\eps c')^{-p}\int_0^t(T^\ast-u)^{-p}\dd u,\qquad p:=\lambda_S/\eps .
\]
If $p<1$ the integral converges as $t\uparrow T^\ast$, which is (i). If $p=1$ it diverges logarithmically, so $S\to\infty$ while $S/A\to0$; this case has $\eps_S=\eps-\beta_S<\eps$. If $p>1$, then $S^{\beta_S}=K(T^\ast-t)^{1-p}(1+o(1))$ with $K=\beta_Sc_S(\theta k)^{\lambda_S}(\eps c')^{-p}/(p-1)$, and so
\[
\frac{S(t)}{A(t)}=K^{1/\beta_S}(\eps c')^{1/\eps}\,(T^\ast-t)^{\frac{\eps-\eps_S}{\eps\beta_S}}\,(1+o(1)).
\]
The exponent is positive, zero or negative according as $\eps_S<\eps$, $\eps_S=\eps$ or $\eps_S>\eps$. In the case of equality, $p-1=\beta_S/\eps$, and a short computation gives $K^{1/\beta_S}(\eps c')^{1/\eps}=(c_S(\theta k)^{\lambda_S}/c')^{1/\beta_S}=\mathcal R(\theta)$.
\end{proof}

\begin{example}[Removing the singularity does not make oversight keep pace]\label{ex:diversion}
Take $\lambda=2$, $\beta=1$, $c=k=c_S=1$, $A_0=1$ and $1-\theta(A)=A^{-1/2}$, which is exactly the threshold $\gamma=1-1/r=\frac12$ of \cref{prop:diversion}. Then $\dot A=A$, so $A(t)=e^t$ and there is no blow-up. Let oversight follow the time-varying-allocation version of \eqref{eq:race} with $\lambda_S=1$ and $\beta_S=2$, namely $\dot S=S^{-1}\theta(A)A$. Then $\frac{\dd}{\dd t}S^2=2(e^t-e^{t/2})$, so
\[
S(t)^2=S_0^2+2(e^t-1)-4(e^{t/2}-1),\qquad \frac{S(t)}{A(t)}\sim\sqrt2\,e^{-t/2}\to0 ,
\]
although the diverted share tends to one.
\end{example}

\begin{figure}[t]
\centering
\includegraphics[width=\textwidth]{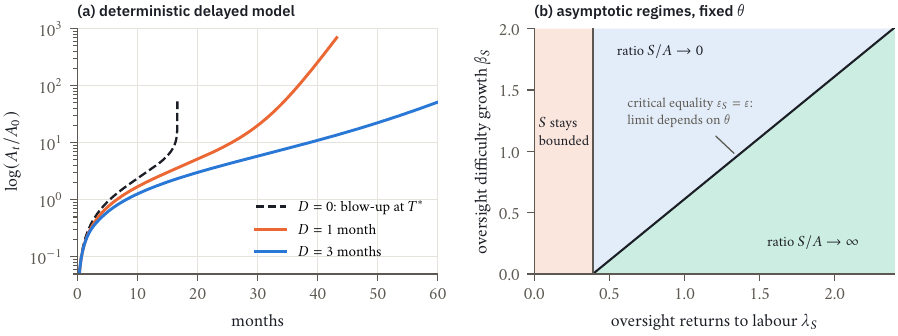}
\caption{(a) Solutions of the deterministic delayed model \eqref{eq:lag} with $D=1$ and $3$ months ($\lambda=1.40$, $\beta=1.01$, $c k^\lambda=g$, $A=1$ on $[-D,0]$): $\log A$ grows exponentially instead of blowing up. The stochastic delayed model is only proved not to explode. (b) Asymptotic regimes of the ratio $S/A$ in \cref{thm:race} at $\eps=\NumEps$ for a fixed allocation share $\theta$. On the boundary line the limit exists and depends on $\theta$.}
\label{fig:frictions}
\end{figure}

\paragraph{Reading the extensions.} Three qualifications belong next to any policy reading of this section. (a) The asymptotic regime of $S/A$ is insensitive to a fixed allocation share \emph{away from} the critical equality of exponents; at that equality the share sets the limiting ratio. (b) Non-explosion of $A$, as in \cref{prop:diversion}, is not the same as oversight keeping pace (\cref{ex:diversion}); the coupled problem with time-varying allocation is not solved here. (c) A policy can materially change finite-horizon performance, the time available for intervention and threshold crossings even when it does not change an asymptotic exponent. Treating $S/A$ as a measure of the adequacy of oversight is a modelling assumption, not a theorem about alignment. In particular, nothing here shows that fixed safety spending is useless or that diversion alone establishes safety. What the model suggests is that the exponents of oversight research, namely its returns to labour $\lambda_S$ and its difficulty growth $\beta_S$, deserve measurement alongside its budget.

\section{Numerical illustrations}\label{sec:calibration}

We use $\lambda=1.40$, $\beta=1.01$ ($\eps=\NumEps$, $r=\lambda/\beta$ unrounded) and convention~B of \cref{box:calibration} with $d_0=4.5$ months, so that $g=\log2/d_0$ and $q_0=g/\log(1+h)$ for each idea size $h$. The fluid blow-up time is $T^\ast=1/(\eps g)=\NumTfluid$ months. Under convention~B the columns $\E\zeta$, the quantiles, $\P(\zeta\le12)$ and all durations in months depend on the convention; $\CV$, $n^\ast$ and $W^\ast$ do not. These are illustrations of the theorems, not forecasts of AI development.

\begin{table}[t]
\caption{Exact laws of the explosion time (\cref{thm:law}) at $\eps=\NumEps$ under convention~B. Times in months. The value $\P(\zeta\le12)\approx1.2\times10^{-13}$ for $h=0.01$ is a high-precision evaluation, stable under precision doubling; it is far below Monte Carlo resolution.}
\label{tab:law}
\centering\small
\renewcommand{\arraystretch}{1.15}
\begin{tabular}{@{}lcccccc@{}}
\toprule
idea size $h$ & $\ell$ & $\E\zeta$ & $\CV$ & $5\%$ quantile & $95\%$ quantile & $\P(\zeta\le12)$\\
\midrule
$0.01$ & $\NumEllA$ & $\NumEzA$ & $\NumCVA$ & $\NumQloA$ & $\NumQhiA$ & $\NumPtwelveA$\\
$0.1$ & $\NumEllB$ & $\NumEzB$ & $\NumCVB$ & $\NumQloB$ & $\NumQhiB$ & $\NumPtwelveB$\\
$1$ & $\NumEllC$ & $\NumEzC$ & $\NumCVC$ & $\NumQloC$ & $\NumQhiC$ & $\NumPtwelveC$\\
$9$ & $\NumEllD$ & $\NumEzD$ & $\NumCVD$ & $\NumQloD$ & $\NumQhiD$ & $\NumPtwelveD$\\
\bottomrule
\end{tabular}
\end{table}

\begin{table}[t]
\caption{The fixed-sample experiment of \cref{sec:detect} at $\alpha=5\%$ (known $q_0$ and $\ell$). $n^\ast$ is the exact minimum fixed-sample size; the boundary pair $\err_{n^\ast-1}>\alpha\ge\err_{n^\ast}$ certifies it. $W^\ast=q^{n^\ast}$ is the remaining mean divided by the initial mean. The last three columns describe $\zeta-T_{n^\ast}$, which is distributed as $q^{n^\ast}\zeta$ (\cref{cor:warning}). Times in months.}
\label{tab:detect}
\centering\small
\renewcommand{\arraystretch}{1.15}
\begin{tabular}{@{}lcccccccc@{}}
\toprule
 & & & & & & \multicolumn{3}{c}{remaining time $\zeta-T_{n^\ast}$}\\ \cmidrule(l){7-9}
$h$ & $n^\ast$ & $\err_{n^\ast-1}$ & $\err_{n^\ast}$ & $W^\ast$ & $\E T_{n^\ast}$ & mean & $5\%$ quantile & $95\%$ quantile\\
\midrule
$0.01$ & $\NumNstarA$ & $\NumErrPrevA$ & $\NumErrNA$ & $\NumWA$ & $\NumETnA$ & $\NumRemA$ & $\NumRemLoA$ & $\NumRemHiA$\\
$0.1$ & $\NumNstarB$ & $\NumErrPrevB$ & $\NumErrNB$ & $\NumWB$ & $\NumETnB$ & $\NumRemB$ & $\NumRemLoB$ & $\NumRemHiB$\\
$1$ & $\NumNstarC$ & $\NumErrPrevC$ & $\NumErrNC$ & $\NumWC$ & $\NumETnC$ & $\NumRemC$ & $\NumRemLoC$ & $\NumRemHiC$\\
$9$ & $\NumNstarD$ & $\NumErrPrevD$ & $\NumErrND$ & $\NumWD$ & $\NumETnD$ & $\NumRemD$ & $\NumRemLoD$ & $\NumRemHiD$\\
\bottomrule
\end{tabular}
\end{table}

\paragraph{Reading the tables.} The deterministic $\NumTfluid$ months is the small-idea limit of a distribution, which is tight if progress consists of many small ideas and wide and right-skewed if it consists of a few large ones (\cref{tab:law}). At equal local log drift, lumpy progress also puts more mass on early arrival, $14$--$20\%$ within a year; unlike the $\CV$, this comparison depends on the calibration convention. With parameters known, relative uncertainty about the remaining time does not shrink (\cref{thm:selfsim}), so any plan keyed to this model is naturally indexed to the current expected doubling time rather than to a calendar date. In the fixed-sample experiment (\cref{tab:detect}) the attainable lead time is governed by the granularity $\ell$. For 1\% ideas the decision comes after about $\NumETnA$ months in expectation, with a remaining duration whose $5$--$95\%$ range is $[\NumRemLoA,\NumRemHiA]$ months. For doubling-sized ideas the decision comes after the working paper's tenfold acceleration ($\E T_9=\NumETnine$ months), with $[\NumRemLoC,\NumRemHiC]$ months remaining. These are predictive quantiles within the model, not guarantees about the world.

\section{Limitations and open problems}\label{sec:open}

The Good process is deliberately minimal (\cref{rem:scope}). Ideas are conditionally independent and strictly positive; the exponent $\eps$ is constant; compute and data, which the working paper lists as separate frictions, are absent; and the idea size $h$ is a free parameter without a direct empirical counterpart. The numerical illustrations depend on an explicit calibration convention and are not forecasts. Within these limits, the following problems seem natural.

\begin{enumerate}[leftmargin=*]
\item \emph{Composite and sequential detection.} With $\ell$ specified and $q_0$ unknown, prove the cube-root law through the invariant reduction of \cref{rem:baseline}, presumably with the constant $(48z_\alpha^2)^{1/3}$. With $\ell$ unknown, use a separated alternative $\ell\ge\ell_{\min}$ or a local power criterion (\cref{rem:unseparated}). Separately, determine optimal anytime alarms with explicit false-alarm control and treat an unknown onset (change point); neither is addressed by \cref{thm:detect}.
\item \emph{Multitype Good processes.} Software and hardware improve each other \cite{DHHK26}. For multitype pure-jump processes with rates $\kappa_i\prod_jA_j^{\eps_{ij}}$, find the explosion criterion and determine whether the explosion time is governed by a matrix analogue of $(z;q)_\infty$.
\item \emph{Stochastic delays.} For the stochastic delayed model of \cref{thm:lag}(iv), decide whether $t^{-1}\log\log A_t\to(\log r)/D$ almost surely, and describe the fluctuations.
\item \emph{Coupled allocation under uncertainty.} Choose a time-varying share $\theta_t$ when $\eps$ is unknown, with the coupled oversight equation of \cref{ex:diversion} in the loop. This is a control problem with learning and an absorbing catastrophe.
\item \emph{Exact laws beyond the lattice.} \Cref{cor:gamma} gives one explicit non-lattice law. Characterise broader classes of size laws for which the law of $\zb$ in \cref{thm:random} is explicit \cite{CPY97,BY05}.
\item \emph{Saturation.} With physical limits, $\eps$ eventually becomes negative \cite[note~5]{CASP26}. For logistic-type rate functions, describe the law of the time to reach a given fraction of the ceiling, and find what replaces the $q$-series.
\end{enumerate}

\paragraph{Acknowledgements.} We are grateful to Chan et al.\ \cite{CASP26} for a clear and carefully argued working paper; its explicit statement of the feedback model is what made the present analysis possible.

\paragraph{Use of AI tools.} AI-based assistants supported parts of this work, including the drafting and editing of the text, the development and cross-checking of the numerical code, and an additional technical reading of the manuscript. The authors reviewed this material and take full responsibility for the content of the paper.

\appendix
\section{Numerical specification and reproducibility}\label{app:numerics}

Every number in this paper is produced by the scripts in the \texttt{code/} folder of the accompanying repository~\cite{LGYS26},
\begin{center}
\url{https://github.com/xiaoyulics/singularity-distribution},
\end{center}
which also contains all generated outputs and figures; no number is typed by hand. \texttt{make\_numbers.py} computes all quantities and writes \texttt{generated/\allowbreak numbers.tex}, which the manuscript reads as \LaTeX\ macros. \texttt{run\_checks.py} runs the checks listed below and writes \texttt{generated/\allowbreak checks\_\allowbreak report.txt}; \texttt{make\_\allowbreak figures.py} regenerates \cref{fig:paths,fig:forecast,fig:uncertainty,fig:detection,fig:frictions}, and \texttt{make\_hero.py} with \texttt{make\_hero\_\allowbreak labels.py} regenerates \cref{fig:hero}. The script \texttt{reproduce.sh} (or \texttt{reproduce.ps1}) runs all of them in order. Environment: Python 3.13.14 with numpy 2.4.6, scipy 1.17.1, mpmath 1.3.0 and matplotlib 3.11.0 (PGF backend, so that figure text is set by \LaTeX\ in the paper's fonts); MiKTeX pdf\TeX\ 1.40.29. Random seeds are fixed in each script.

\paragraph{Exact law.} The alternating series of \cref{thm:law}\ref{it:series} is evaluated in \texttt{mpmath}. The working precision is set per evaluation from a floating-point estimate of the largest term relative to $(q;q)_\infty$ (the number of digits lost to cancellation) plus $40$--$60$ guard digits. For example, this gives $172$ digits for the $5\%$ quantile at $h=0.01$ ($q=0.99613$) and $90$ digits for $h\ge0.1$. Truncation is certified: the term magnitudes decrease monotonically once $q^{k+1}<\frac12$, i.e.\ for $k>\log2/\ell$, and from that point the remainder of the alternating series is bounded by the first omitted term, which is required to be below $10^{-(\text{digits}-5)}$ relative to the partial sum. Quantiles are found by bisection on the distribution function to relative tolerance $10^{-11}$. Lower-tail probabilities are computed as $1-\P(\zb>t)$ with additional guard digits, never subtracted at inadequate precision.

\paragraph{Checks run} (results in \texttt{generated/\allowbreak checks\_report.txt}).
\begin{itemize}[leftmargin=*]
\item \emph{Regression targets.} All entries of \cref{tab:law,tab:detect} agree with an independent recomputation to relative accuracy $2\times10^{-6}$, and the four detection boundary pairs agree to $10^{-9}$.
\item \emph{Precision doubling.} Each quantile and each $\P(\zeta\le12)$ is recomputed at twice the working precision; the largest change is below $10^{-40}$ in absolute terms, and the relative change of $\P(\zeta\le12)\approx1.16\times10^{-13}$ at $h=0.01$ is below $10^{-80}$. In the near-one regime $q=0.99613$, the normalisation $\P(\zb>0)=1$ holds to $10^{-39}$ at $312$ digits and to $10^{-351}$ at $624$ digits.
\item \emph{Structure.} On $400$-point grids the distribution function is monotone and the density is nonnegative within $10^{-14}$; for $h\in\{0.1,1,9\}$, moments $0$--$3$ obtained by quadrature of the density agree with $n!/(q;q)_n$ to relative accuracy $10^{-15}$.
\item \emph{Monte Carlo cross-checks with standard errors.} For $h\in\{0.1,1,9\}$ ($10^5$--$2\times10^5$ samples), the mean, the distribution function at the two quantiles and $\P(\zeta\le12)$ agree with the exact values within $4.5$ standard errors (largest $|z|$ about $2.3$). At $h=0.01$, $\P(\zeta\le12)$ is not Monte Carlo-validated: it is far below simulation resolution. The two error probabilities at the exact balancing threshold of the likelihood-ratio test are simulated ($4\times10^5$ replicates per hypothesis) and agree with the exact minimax error within $4.5$ standard errors. The pathwise fraction quoted after \cref{cor:warning} uses $2\times10^5$ paths truncated after $200$ ideas (the omitted tail has relative size below $10^{-23}$).
\item \emph{Detection thresholds.} For $k=1,\dots,n-1$ let $w_{0k}=e^{k\ell}-1$, $w_{1k}=1-e^{-k\ell}$ and $Z_j=\sum_kw_{jk}E_k$; the zero-weight term $k=0$ is omitted. Under $H_j$, $\mathcal L_n=\ell\sum_kk-Z_j$. With distinct weights, $\P(Z_j>x)=\sum_kc_{jk}e^{-x/w_{jk}}$ with $c_{jk}=\prod_{i\ne k}w_{jk}/(w_{jk}-w_{ji})$, evaluated at $300$ digits after checking $|\sum_kc_{jk}-1|<10^{-30}$. The balancing threshold solves $\P(Z_0>x)+\P(Z_1>x)=1$, and the common error is $\P(Z_1>x)$. We report the boundary pair $(\err_{n^\ast-1},\err_{n^\ast})$; at $h=0.01$, $\err_{n^\ast}$ lies only $8\times10^{-5}$ below $\alpha$, so a Monte Carlo search could not certify this threshold. The non-asymptotic bounds of \cref{thm:detect} bracket every computed $n^\ast$.
\item \emph{\Cref{prop:uncertain,cor:gamma}.} For $\eps\sim\mathrm{Uniform}(0,1)$, $h=1$, $q_0=1$, the quantity $t\,\P(\zeta>t\mid\eps>0)$, computed by Gauss--Legendre quadrature over $\eps$ with the exact survival function, equals $1.489$, $1.465$, $1.454$ and $1.448$ at $t=25,50,100,200$, approaching $1/\log2=1.4427$. For Beta$(b,1)$ marks, $b\in\{0.5,2,5\}$, simulated perpetuities pass a Kolmogorov--Smirnov test against Gamma$(b+1,1)$ ($10^5$ samples each), and the transform identity in the proof of \cref{cor:gamma} holds to $10^{-10}$.
\item \emph{Small-time asymptotic.} \texttt{check\_small\_time.py} evaluates $\log\P(\zb\le t)$ at $q=\frac12$ from the exact series at $600$--$18{,}000$ digits for $t=10^{-8},10^{-16},10^{-32},10^{-64}$. Relative to the three-term expansion of \cite[Theorem~1.1]{BR08} quoted after \cref{thm:law}, the residual divided by $L$ is $-0.81$, $-0.52$, $-0.33$ and $-0.20$, consistent with an $o(L)$ remainder; relative to the two terms of \cref{thm:law}\ref{it:fast} it stays bounded ($-0.40$ to $0.21$), as stated.
\item \emph{Gamma coefficient.} At $\ell=\log10$, quadrature of $\E\zb^{-2\pi i/\ell}$ against the exact density agrees with $\Gamma(1-2\pi i/\ell)$ to ten digits.
\item \emph{\Cref{fig:hero}.} \texttt{make\_hero.py} simulates $1\,400$ histories of a Good process with Lomax idea sizes, $\log(1+H)\sim\Exp(b\eps)$ with $b=6$ and $\eps=\NumEps$, at local log drift $\log2/4.5$ per month (so $q_0=gb\eps$ and $\zb\sim\GammaDist(7)$ by \cref{cor:gamma}). Histories are rasterised as additive light with bilinear splatting, bloom and a filmic tone map, and coloured by the rank of their explosion time. The histories at the $10\%$, $50\%$ and $90\%$ quantiles of $\zeta$ are traced brightly in both panels. Left panel: capability in doublings against calendar time ($0$--$36$ months, $0$--$13$ doublings); the dashed curve is the fluid ODE with $cA_0^{\eps}=g$; the horizon glow and silhouette are the exact density of $\zeta=\zb/q_0$; the ticks are at $\E T_n=(1-y_1^n)/(q_0(1-y_1))$ with $y_1=b/(b+1)$, which converge to $\E\zeta$. Right panel: radius $\zeta-t$ (up to $27$ months) and angle $2\pi\log_2(A_t/A_0)/5$, so that between ideas a history moves radially inwards and at each idea it turns through an arc. By the self-similarity of \cref{sec:midnight} (at idea times, as in the remark on random sizes), the part of a spiral after any idea is, in law, a rotated and rescaled copy of a whole spiral. The stars, the halo and the reflection in the water are decorative. \texttt{make\_hero\_labels.py} sets the labels in the paper's fonts.
\item \emph{Delay equation.} \eqref{eq:lagB} is integrated on the logarithmic scale by the explicit Euler method; the figure uses step $10^{-3}$ months. Halving the step from $2\times10^{-3}$ to $10^{-3}$ and to $5\times10^{-4}$ changes $\log A(t)$ at $t=10,20,30,40$ by at most $2.8\times10^{-4}$ and $1.4\times10^{-4}$ (relative) for $D=1$, and by $2.2\times10^{-5}$ and $1.1\times10^{-5}$ for $D=3$, consistent with first-order convergence.
\end{itemize}

\end{document}